\documentclass{article}

\usepackage{microtype}
\usepackage{graphicx}
\usepackage{subcaption}
\usepackage{booktabs} 

\usepackage{multirow}
\usepackage{booktabs} 
\usepackage{array}    
\usepackage{tabularx}
\usepackage{hyperref}

\usepackage[accepted]{icml2026}

\usepackage{amsmath}
\usepackage{amssymb}
\usepackage{mathtools}
\usepackage{amsthm}

\usepackage{algorithm}
\usepackage{xcolor}

\usepackage[capitalize,noabbrev]{cleveref}

\theoremstyle{plain}
\newtheorem{theorem}{Theorem}[section]
\newtheorem{proposition}[theorem]{Proposition}

\theoremstyle{definition}

\theoremstyle{remark}

\usepackage[textsize=tiny]{todonotes}

\icmltitlerunning{FlowPET: Physics-Informed Symplectic Flow Matching for Low-Count PET Reconstruction}

\begin{document}

\twocolumn[
  \icmltitle{FlowPET: Physics-Informed Symplectic Flow Matching \texorpdfstring{\\}{} for Low-Count PET Reconstruction}



  \icmlsetsymbol{equal}{*}

  \begin{icmlauthorlist}
    \icmlauthor{Zheng Zhang}{1}
    \icmlauthor{Hao Tang}{1}
    \icmlauthor{Yingying Hu}{2}
    \icmlauthor{Zhanli Hu}{3}
    \icmlauthor{Jing Qin}{1}
  \end{icmlauthorlist}

    \icmlaffiliation{1}{Centre for Smart Health, The Hong Kong Polytechnic University, Hong Kong, China}
    \icmlaffiliation{2}{Department of Nuclear Medicine, Sun Yat-sen University Cancer Center, Guangzhou, China}
    \icmlaffiliation{3}{Research Center for Medical AI, Shenzhen Institute of Advanced Technology, Chinese Academy of Sciences, Shenzhen, China}

  \icmlcorrespondingauthor{Hao Tang}{howard.haotang@gmail.com}

  \icmlkeywords{Machine Learning, ICML}

  \vskip 0.3in
]



\printAffiliationsAndNotice{}  

\begin{abstract}
Low-count Positron Emission Tomography (PET) reconstruction is severely hindered by the dissipative nature of prevailing generative models, where the inherent phase-space contraction leads to the numerical extinction (``wash-out'') of weak but diagnostically critical lesion signals. To overcome this geometric limitation, we propose \textbf{FlowPET}, a physics-informed framework that reformulates reconstruction as volume-preserving transport in a symplectic phase space. By parameterizing the posterior dynamics via a Separable Hamiltonian System, our approach guarantees a divergence-free vector field by construction, theoretically immunizing weak signals against probability mass collapse. To steer this conservative flow, we introduce conjugate boundary conditions based on the Range-Null space decomposition of the PET operator; this strictly enforces data consistency in the range space while confining stochastic uncertainty injection to the unobserved null space. We train the model via symplectic flow matching and perform inference using a symplectic leapfrog integrator. Extensive experiments on BrainWeb, clinical pediatric, and UDPET datasets demonstrate that \textbf{FlowPET} not only surpasses state-of-the-art deterministic and stochastic baselines in SSIM and PSNR but, more crucially, exhibits superior recovery of low-contrast lesions. The results confirm that imposing Hamiltonian structural constraints offers a robust geometric safeguard for medical inverse problems in high-noise regimes. Code is available at \url{https://github.com/xiaochaorouz/FlowPET}.
\end{abstract}

\section{Introduction}
\label{sec:intro}

\begin{figure}[t]                  
  \centering
  \includegraphics[width=\linewidth]{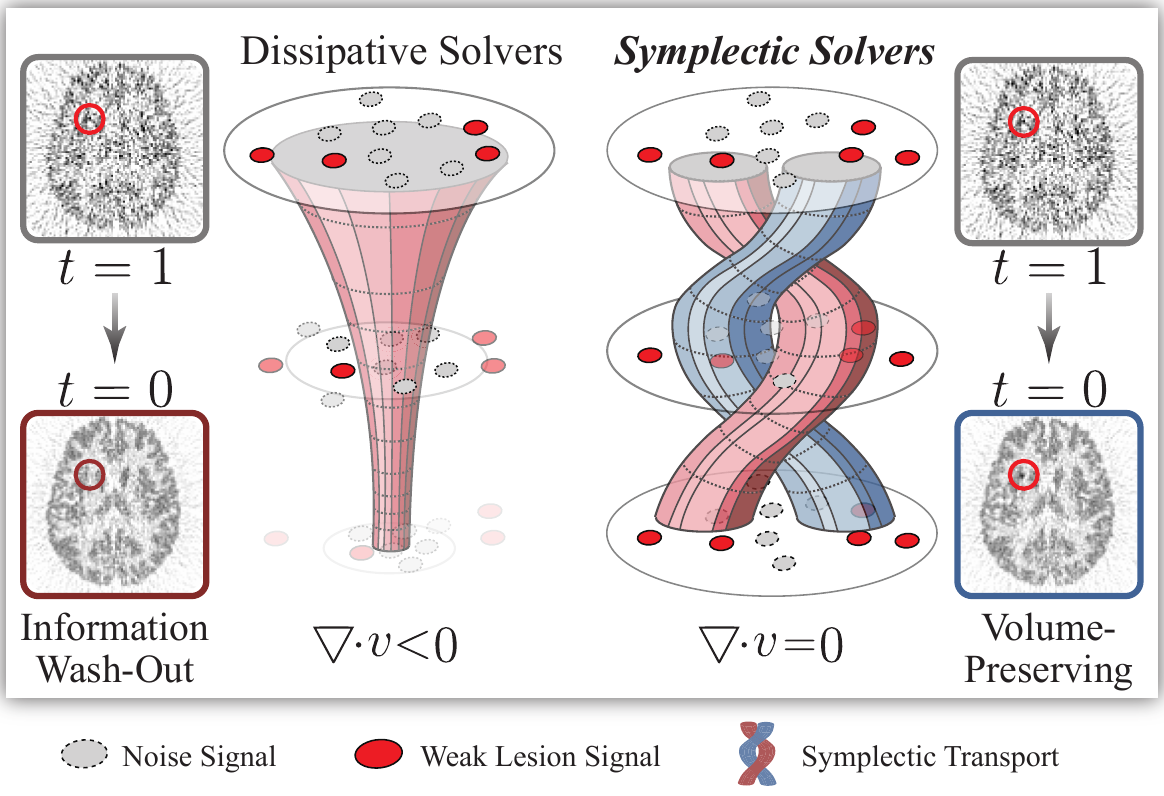}
  \vspace{-4mm}
    \caption{Conceptual comparison of generative dynamics in PET reconstruction. Standard dissipative solvers induce a contractive field ($\nabla \cdot v < 0$), causing weak lesion signals to be ``washed out'' alongside noise. In contrast, symplectic solvers lift the dynamics to a symplectic phase space, enforcing volume preservation ($\nabla \cdot v = 0$) to losslessly transport clinically vital signals under strict physical constraints.}
  \label{fig:motivation}\vspace{-4mm}
\end{figure}
Low-count Positron Emission Tomography (PET) reconstruction represents a critical frontier in medical imaging, balancing the need for precise metabolic quantification against stringent radiation safety protocols. Reducing tracer dose inevitably degrades the Signal-to-Noise Ratio (SNR) due to Poisson photon-counting statistics. In this regime, the reconstruction task transcends simple inversion; it becomes a complex Bayesian inference problem aiming to recover the posterior distribution conditional on noisy measurements. The paramount challenge is not merely denoising, but the faithful recovery of \textit{weak lesion signals}, features that are diagnostically vital for early staging yet statistically fragile against the dominant noise floor.

To mitigate the noise amplification inherent in classical algorithms (e.g., OSEM~\cite{hudson1994accelerated}), the field has pivoted to deep learning. While effective, current paradigms face a fundamental dilemma.
Deterministic approaches~\cite{zhang2026fourierpet,zhang2023deep,xie2025prompt,wang2020fbp,cui2024mcad} minimize reconstruction error by approximating the posterior mean. This leads to the well-known ``regression-to-the-mean'' phenomenon, where high-frequency textures and subtle lesion contrasts are obliterated by statistical averaging.
Generative paradigms, conversely, aim to restore detailed distributions. While Stochastic Differential Equation (SDE) solvers~\cite{ai2025red,huang2025diffusion,luo2023image} significantly enhance textural realism, they fundamentally rely on \textit{dissipative dynamics} to drive sampling trajectories. We identify a critical geometric flaw in this mechanism: standard diffusion flows induce a strictly negative divergence ($\nabla \cdot v < 0$), forcing a continuous contraction of the phase-space volume. For low-count PET, this contraction is indiscriminate; it fails to distinguish between stochastic noise and sparse pathological features. Consequently, weak lesion signals are prone to \textit{numerical extinction}, as they are effectively ``washed out'' by the flow before forming coherent structures (Figure~\ref{fig:motivation}).

Preserving these fragile signals requires shifting from dissipative contraction to a conservative geometric framework. We introduce \textbf{FlowPET}, which reformulates posterior sampling as an invertible, volume-preserving evolution. Unlike standard methods that operate on the image manifold, we \textit{lift} the reconstruction to a \textit{Symplectic Phase Space}.
By parameterizing the vector field via \textit{Separable Hamiltonian Dynamics} within a \textit{Flow Matching} framework, our method strictly obeys Liouville’s Theorem~\cite{safko2002classical}. This allows the model to leverage the auxiliary momentum space to reorganize information without compression, ensuring that the probability mass of weak lesions is isometrically transported rather than extinguished.

However, volume preservation presents a new challenge: without the ``denoising'' effect of contraction, the model lacks an intrinsic mechanism to suppress noise. To navigate this, we must explicitly guide the conservative flow using the inherent anisotropy of the PET inverse problem. We enforce Physics-Informed Conjugate Boundaries based on the Range-Null space decomposition. Specifically, we encode data consistency gradients into the momentum variable (Range space) while confining stochastic uncertainty injection strictly to the orthogonal subspace (Null space). This design ensures that the generative process synthesizes plausible high-frequency textures without hallucinating artifacts that violate the underlying physics.

In summary, we make three primary contributions:
\begin{itemize}
    \item \textbf{Symplectic Generative Framework:} We propose \textbf{FlowPET}, a framework utilizing separable Hamiltonian dynamics to ensure divergence-free transport. This theoretically prevents the ``signal wash-out'' pervasive in dissipative models by maintaining phase-space volume invariant.
    \item \textbf{Physics-Informed Orthogonality:} We introduce a Range-Null space decomposition strategy for phase-space boundaries. This rigorously enforces data consistency in the range space while confining stochastic exploration to the null space, resolving the conflict between fidelity and diversity.
    \item \textbf{Structure-Preserving Inference:} We implement a symplectic Leapfrog integrator for inference, demonstrating that strict geometric conservation yields superior preservation of low-contrast lesions compared to state-of-the-art baselines on diverse low-count datasets.
\end{itemize}

\section{Related Work}

\paragraph{PET Reconstruction}
The trajectory of deep learning in PET reconstruction reflects a fundamental shift from deterministic regression to stochastic generative modeling. Early deterministic approaches primarily learn a mapping from low-count sinograms to high-quality images in a direct~\cite{wang2020fbp,kaviani2023image,cui2024mcad} or an iterative manners~\cite{zhang2026fourierpet,zhang2023deep,xie2025prompt,hu2022transem,FuLY0SSD21}. While these regression-based methods surpass classical iterative algorithms~\cite{hudson1994accelerated,shepp2007maximum} in terms of peak SNR, they mathematically approximate the \textit{posterior mean} $\mathbb{E}[x|y]$.
Consequently, they are plagued by the ``regression-to-the-mean'' phenomenon~\cite{9887996}, resulting in over-smoothed reconstructions where high-frequency textures are averaged out and small, low-contrast lesions are suppressed~\cite{Song2021Score}.
To recover high-frequency fidelity, the paradigm shifted towards Posterior Sampling. Early attempts using GANs~\cite{Isola_2017_CVPR,xue2021lcpr,liu2022deep,manoj2024utilizing,wang20243d} succeeded in synthesizing realistic textures but suffered from training instability and mode collapse. Recently, Score-based Generative Models (SGMs) and Diffusion Models (DDPMs) have emerged as the state-of-the-art~\cite{ai2025red,luo2023image,gan2025pseudo,webber2025likelihood,yu2025robust,tang2024hf,shen2024bidirectional}. By formulating reconstruction as the time-reversal of a SDE~\cite{Song2021Score}, these methods iteratively refine noisy priors onto the data manifold, demonstrating superior capability in modeling complex multimodal distributions.

Despite their success, these stochastic solvers fundamentally rely on \textit{dissipative dynamics} to stabilize the generative trajectory. While effective for denoising, this contraction mechanism is indiscriminate: it aggressively suppresses low-probability features. In the low-count regime, weak lesion signals are prone to \textit{numerical extinction}. They are effectively ``cleaned away'' by the dissipative flow before they can form coherent structures, leading to the ``signal wash-out'' phenomenon~\cite{toth2019hamiltonian}.


\paragraph{Hamiltonian and Symplectic Dynamics}
The integration of symplectic geometry into deep learning has emerged as a powerful inductive bias for ensuring physical consistency. Building on Neural ODEs~\cite{chen2018neural}, Hamiltonian Neural Networks (HNNs) and Symplectic Recurrent Neural Networks  were proposed to parameterize continuous-time dynamics that strictly satisfy conservation laws, such as energy conservation. 
Unlike unconstrained solvers, these methods leverage symplectic integrators to maintain phase-space volume~\cite{toth2019hamiltonian} (Obey Liouville’s Theorem~\cite{safko2002classical}), proving highly effective for stable, long-horizon physical simulations~\cite{chen2019symplectic}. 
Beyond dynamics learning, this structure-preserving property has been extended to generative modeling. Early volume-preserving flows like NICE~\cite{dinh2014nice} and Hamiltonian Generative Networks~\cite{toth2019hamiltonian} exploited reversible dynamics for efficient density estimation and sampling. 
Recently, Symplectic Generative Networks~\cite{aich2025symplectic} have further formalized this framework. 
%
%
However, these approaches primarily address unconditional generation, physical simulation, or spatial registration. They have yet to be effectively adapted for \textit{ill-posed inverse problems}, where the challenge is not just conservation, but guiding the conservative flow to satisfy rigorous data consistency constraints. In this paper, \textbf{FlowPET} bridges this gap by embedding a Separable Hamiltonian structure within a conditional Flow Matching framework, leveraging geometric conservation to immunize weak PET signals against numerical dissipation.


\section{Preliminaries}
\label{sec:preliminaries}

\paragraph{Problem Formulation.}
Low-count PET acquisition is intrinsically governed by Poisson photon-counting statistics. Given the unknown radiotracer distribution $x \in \mathbb{R}^d$ and the system matrix $\mathrm{A} \in \mathbb{R}^{m \times d}$, the measured sinogram $y \in \mathbb{R}^m$ follows $y \sim \operatorname{Poisson}(\mathrm{A} x)$. Reconstruction is thus an ill-posed inverse problem. We frame this as Bayesian inference, aiming to sample from the intractable posterior $p(x \mid y) \propto p(y \mid x) p(x)$. Due to the high dimensionality of $x$, direct sampling is computationally prohibitive. This motivates the use of continuous-time generative models to construct a transport map from a simple noise distribution to this complex target posterior.

\paragraph{Conditional Flow Matching.}
Flow Matching (FM)~\cite{lipman2022flow,liu2022flow} offers a robust, simulation-free framework for training such continuous transport maps. FM regresses a time-dependent vector field $v_t$ that generates a probability path $p_t(z)$ interpolating between a source $p_0$ and a target $p_1$. For a conditional path defined by linear interpolation $z_t = (1-t)z_0 + t z_1$, the target vector field is explicitly $u_t(z|z_1) = z_1 - z_0$. The objective is to minimize the regression loss:
\begin{equation}
    \mathcal{L}_{\text{FM}}(\theta) = \mathbb{E}_{t, z_0, z_1} \left[ \| v_\theta(z_t, t) - (z_1 - z_0) \|^2 \right].
\end{equation}
Standard FM operates on Euclidean manifolds where the vector field $v_\theta$ is unconstrained. Consequently, the flow is prone to \textit{phase-space contraction}, which can lead to the numerical extinction of weak signals in low-count regimes.

\paragraph{Hamiltonian Dynamics.}
\label{Hamiltonian Dynamics}
To enforce volume preservation, we turn to Hamiltonian mechanics,
which describes system evolution on a symplectic phase space $\mathcal{Z} = \mathcal{X} \times \mathcal{P} \cong \mathbb{R}^{2d}$. Here, the state $z = (x, p)$ augments the image data $x$ with an auxiliary momentum $p$. The dynamics are governed by a scalar Hamiltonian $H(z, t)$ via the canonical equations:
\begin{equation}
    \frac{\mathrm{d}z}{\mathrm{d}t} = J \nabla_z H(z, t), \quad J = \begin{bmatrix} 0 & I_d \\ -I_d & 0 \end{bmatrix}.
\end{equation}
A fundamental consequence of this structure is \textit{Liouville's Theorem}~\cite{safko2002classical}: the vector field is divergence-free, i.e., $\nabla \cdot (J \nabla H) \equiv 0$. This guarantees that phase-space volume is strictly invariant along the trajectory~\cite{safko2002classical}. By parameterizing the flow via a Hamiltonian, we ensure theoretically that the probability density is transported without the artificial contraction inherent in standard Euclidean flows.
\section{Method}
\label{sec:method}

\begin{figure*}[t]                  
  \centering
  \includegraphics[width=\textwidth]{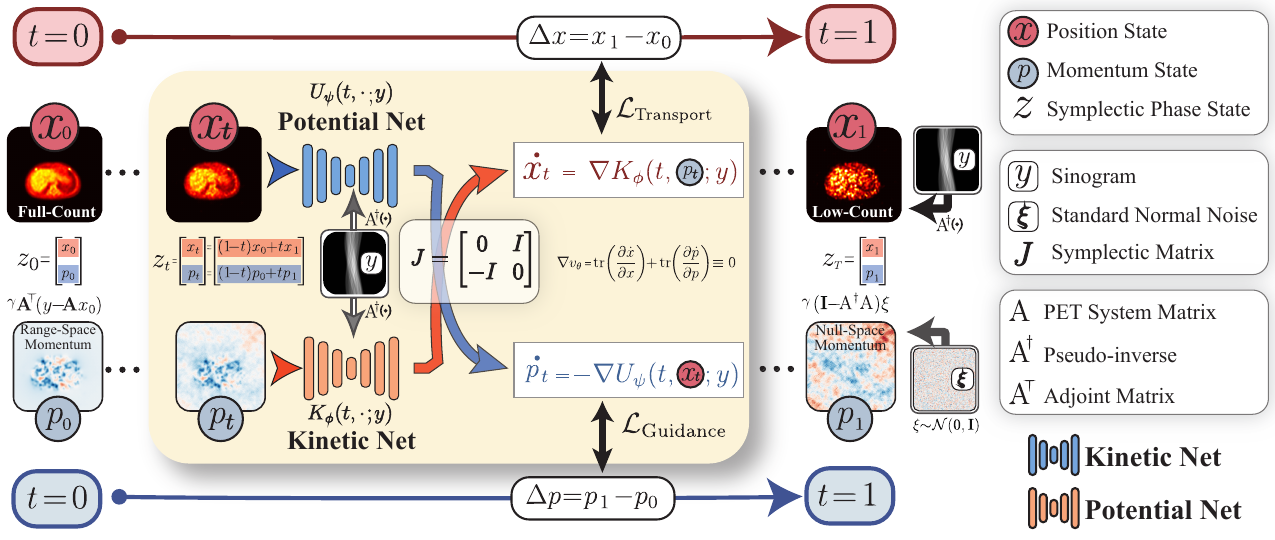}

    \caption{Overview of the \textbf{FlowPET} framework. \textbf{FlowPET} formulates reconstruction as volume-preserving transport in symplectic phase space $\mathcal{Z} = \mathcal{X} \times \mathcal{P}$, driven by a Separable Hamiltonian System ($U_\psi, K_\phi$) ensuring divergence-free dynamics. The trajectory utilizes physics-informed momentum initialization, where $p_0$ embeds range-space data consistency and $p_1$ injects orthogonal null-space uncertainty to recover textures strictly adhering to physical constraints.}
  \label{fig:flowchart}\vspace{-2mm}
\end{figure*}

\paragraph{Motivation}
The fundamental bottleneck in recovering low-count PET images is the fragility of weak lesion signals against the dissipative nature of standard generative models. In Euclidean image manifolds, continuous-time flows (e.g., in diffusion models) typically induce a negative divergence ($\nabla \cdot v < 0$), fundamentally relying on \textit{phase-space contraction} to concentrate probability mass. While this contraction effectively suppresses noise, it creates a perilous environment for small, spatially compact lesions, which are prone to \textit{numerical extinction}, effectively ``washed out'' alongside the noise before they can form coherent structures.

To resolve this dilemma, we argue that the generative process must be \textit{lifted} from a dissipative geometry to a conservative one. We propose \textbf{FlowPET} (Fig.~\ref{fig:flowchart}), which reformulates reconstruction as a trajectory in an augmented \textit{Symplectic Phase Space}. By adopting a Hamiltonian formulation, we ensure, by construction, that the transport is strictly volume-preserving (Liouville's Theorem). This structural invariance acts as a safeguard, allowing the model to distinguish signal from noise based on rigorous symplectic flow matching rather than aggressive geometric contraction.

\subsection{Hamiltonian Posterior Transport}
\label{method4.1}

Standard generative flows operate on the image manifold $\mathcal{X} \subseteq \mathbb{R}^d$. To enable non-dissipative transport, we lift the conditional reconstruction problem to the \textbf{symplectic phase space} $\mathcal{Z} \coloneqq \mathcal{X} \times \mathcal{P} \cong \mathbb{R}^{2d}$. Here, $\mathcal{P} \cong \mathbb{R}^d$ serves as an auxiliary momentum buffer, allowing signal energy to be dynamically redistributed between structural forms (position) and latent velocities (momentum) rather than being dissipated out of the system.

For a given measurement $y$, we parameterize the posterior transport as a flow map $\phi_t(\cdot\mid y)$ governed by Hamilton's equations:
\begin{equation}
    \dot z = J \nabla_z H_\theta(t,z;y),
    z=\begin{bmatrix}x\\p\end{bmatrix},
    J=\begin{bmatrix}0&I_d\\-I_d&0\end{bmatrix},
\end{equation}
where $J$ is the canonical symplectic matrix defining the geometric structure of the flow.

\paragraph{Separable Hamiltonian Parameterization.}
While any Hamiltonian formulation guarantees volume preservation in continuous time, practical deployment requires an architecture that supports efficient, structure-preserving numerical integration. To this end, we impose a \textbf{Separable Hamiltonian System} inductive bias. 
We strictly decompose the Hamiltonian energy $H_\theta$ into a potential term $U_\psi$ (dependent only on state) and a kinetic term $K_\phi$ (dependent only on momentum):
\begin{equation}
\label{Separable_Hamiltonian}
H_\theta(t, x, p; y) = U_\psi(t, x; y) + K_\phi(t, p; y).
\end{equation}
Substituting this decomposition into the symplectic gradient yields a block-structured vector field:
\begin{equation}
\label{vector_field}
v_\theta(t, z; y)
=
\begin{bmatrix}
    \dot{x} \\[0.5em]
    \dot{p}
\end{bmatrix}
=
\begin{bmatrix}
    \nabla_p K_\phi(t, p; y) \\[0.5em]
    -\nabla_x U_\psi(t, x; y)
\end{bmatrix}.
\end{equation}
This design ensures that the time-evolution of $x$ is driven exclusively by the momentum conjugate, while the force acting on $p$ is derived exclusively from the spatial potential.

\begin{proposition}[Divergence-Free by Construction]
\label{prop:div_free}
Assume $U_\psi$ and $K_\phi$ are continuously differentiable. The Jacobian trace of the separable vector field in Eq.~(\ref{vector_field}) vanishes identically due to the vanishing diagonal blocks:
\begin{equation} 
\nabla_z \cdot v_\theta = \operatorname{tr}\left( \frac{\partial \dot{x}}{\partial x} \right) + \operatorname{tr}\left( \frac{\partial \dot{p}}{\partial p} \right) = \operatorname{tr}(\mathbf{0}) + \operatorname{tr}(\mathbf{0}) \equiv 0. 
\end{equation} 
\end{proposition}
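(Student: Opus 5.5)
The plan is to write the Jacobian of $v_\theta$ with respect to $z=(x,p)$ in $d\times d$ blocks and take its trace directly. The divergence of a vector field on $\mathbb{R}^{2d}$ is the trace of its Jacobian:
\[
\nabla_z\cdot v_\theta=\operatorname{tr}\frac{\partial v_\theta}{\partial z}
=\operatorname{tr}\!\begin{bmatrix}\partial_x\dot x & \partial_p\dot x\\ \partial_x\dot p & \partial_p\dot p\end{bmatrix}
=\operatorname{tr}(\partial_x\dot x)+\operatorname{tr}(\partial_p\dot p).
\]
Only the diagonal blocks contribute, so it suffices to show that each of them vanishes.

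The key step uses the separable structure of Eq.~(\ref{Separable_Hamiltonian}). By Eq.~(\ref{vector_field}), $\dot x=\nabla_p K_\phi(t,p;y)$ depends on $(t,p,y)$ but not on $x$. Hence $\partial_x\dot x=\mathbf{0}$ as a matrix. Likewise, $\dot p=-\nabla_x U_\psi(t,x;y)$ has no dependence on $p$, so $\partial_p\dot p=\mathbf{0}$. The divergence is therefore $\operatorname{tr}(\mathbf{0})+\operatorname{tr}(\mathbf{0})=0$ at every $(t,z)$.

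The main obstacle is regularity. The proposition assumes only $C^1$ potentials, so $\nabla_p K_\phi$ and $\nabla_x U_\psi$ are merely continuous, and their derivatives in the "own" variable, namely the Hessians, may not exist. The resolution is that the diagonal blocks never require those Hessians. A function that is independent of $x$ has partial derivatives in $x$ that exist and equal zero, with no smoothness assumption needed. So the divergence is well defined and vanishes identically. The off-diagonal blocks $\nabla_p^2K_\phi$ and $-\nabla_x^2U_\psi$, which could fail to exist, do not enter the trace.

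As a cross-check, I would note the agreement with the general Hamiltonian identity $\nabla\cdot(J\nabla H)=\operatorname{tr}(J\nabla^2H)=0$ from the Preliminaries. That identity holds because $J$ is antisymmetric and $\nabla^2 H$ is symmetric, but it needs $H\in C^2$. The separable argument above removes this requirement.
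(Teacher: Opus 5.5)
Your proof is correct and is essentially the paper's argument: you write the $2d\times 2d$ Jacobian of $v_\theta$ in blocks, observe that separability makes $\partial_x \nabla_p K_\phi$ and $\partial_p \nabla_x U_\psi$ vanish, and take the trace. Your regularity remark is a genuine refinement: the paper's proof displays the off-diagonal blocks as $\nabla^2_{pp}K_\phi$ and $-\nabla^2_{xx}U_\psi$, which tacitly uses second derivatives beyond the stated $C^1$ hypothesis, whereas you correctly note that only the diagonal partials enter the divergence, and these trivially exist and equal zero.
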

\begin{proof}
See the Appendix~\ref{app:proof-1}
\end{proof}
\textbf{Physical Implications.} Proposition~\ref{prop:div_free} establishes that the flow satisfies Liouville’s theorem \textit{by architectural construction}. In the context of low-count PET, this provides a topological guarantee: \textit{the phase-space volume occupied by weak lesion signals is an invariant of the motion}. Unlike diffusion models where probability mass can contract into the mode, our Hamiltonian transport strictly conserves information density, ensuring that subtle pathologies are transported, \textit{not erased}, during the reconstruction process.

\subsection{Physics-Informed Symplectic Flow Modeling}
\label{method4.2}
While the Hamiltonian formulation (\cref{method4.1}) guarantees volume-preserving transport, it implies a conservation law without specifying the \textit{geometry} of the transport. In ill-posed inverse problems, the posterior landscape is fundamentally anisotropic: it is stiffly constrained in directions measurable by the forward operator, yet expansive in the null space where data is silent.

To navigate this landscape, we introduce a \textbf{Range-Null Space Decomposition} into the phase-space boundaries. We construct conjugate boundary conditions that explicitly decouple data consistency (Range) from stochastic variability (Null), ensuring the learned flow respects the spectral properties of the PET operator.

\subsubsection{Geometric Structure of the PET Posterior}
\label{method2}
The system matrix $\mathbf{A}$ imposes a fundamental information asymmetry. The likelihood function constrains the solution manifold only along the range space of the adjoint $\mathbf{A}^\top$, leaving fine-scale textures, which predominantly reside in the kernel of $\mathbf{A}$, under-determined. Standard methods often conflate these subspaces, leading to a dilemma: either overfit the noise (range-space corruption) or over-smooth the texture (null-space collapse). A faithful transport mechanism must preserve orthogonality: it should strictly enforce consistency where the physics is informative, and inject controlled stochasticity only where the physics is blind.

\subsubsection{Conjugate Phase-Space Boundaries}
We define a probability path bridging the full-count reference distribution ($t=0$) and a measurement-informed prior ($t=1$). The forward trajectory ($t=0 \to 1$) denotes degradation, while the reverse trajectory ($t=1 \to 0$) performs the reconstruction. The boundary states $z_0, z_1 \in \mathcal{Z}$ are defined as:
\[
z_0 = \begin{bmatrix} x_0 \\ p_0 \end{bmatrix} \quad (\text{Source}), 
\qquad
z_1 = \begin{bmatrix} x_1 \\ p_1 \end{bmatrix} \quad (\text{Target}).
\]
\textbf{Position Boundaries.}
The spatial coordinates are anchored by the fully-sampled reference $x_0 \coloneqq x_{\mathrm{full}}$ and the pseudo-inverse approximation: $x_1 \coloneqq \mathbf{A}^\dagger y$ (implemented via Filtered Back-Projection). \textbf{Momentum Boundaries.} The novelty lies in the momentum variables, which we utilize to encode the \textit{gradient flow} of the posterior density:

\paragraph{Range-Space Momentum ($t=0$: Data Score Embedding).}
At the source boundary, we define momentum to capture the driving force of the likelihood. 
To counteract the scale disparity induced by the ill-conditioned nature of $\mathbf{A}^\top$, we introduce the \textbf{Momentum Compression Factor} $\gamma$:
\begin{equation}
\label{eq:p0_define}
p_0 \coloneqq \gamma\; \mathbf{A}^\top (y - \mathbf{A} x_0). 
\end{equation}
Geometrically, $p_0$ corresponds to the scaled negative gradient of an $L_2$ data fidelity surrogate $\frac{1}{2}\|y-\mathbf{A}x\|^2$, adopted in place of the exact Poisson log-likelihood to avoid the singularity of the Poisson gradient in extreme low-count regimes. By initializing $p_0$ with this value, we effectively embed the Data Score into the momentum variable. This acts as a \textit{restoring force} within the Hamiltonian system, pulling the trajectory towards the manifold of data-consistent solutions during the generative process (See the Appendix~\ref{app:proof_2}).

\paragraph{Null-Space Momentum ($t=1$: Orthogonal Noise Injection).}
At the target boundary, we isolate uncertainty. We initialize $p_1$ by projecting isotropic Gaussian noise onto the null space of the operator, 
scaled identically by $\gamma$ to ensure the stochastic energy injection is commensurate with the deterministic restoring force:
\begin{equation}
\label{eq:p1_define}
p_1 = \gamma \; (\mathbf{I} - \mathbf{A}^\dagger \mathbf{A})\xi, \quad \xi \sim \mathcal{N}(0, \mathbf{I}). 
\end{equation}
where $(\mathbf{I} - \mathbf{A}^\dagger \mathbf{A})$ is the orthogonal projector onto $\text{Null}(\mathbf{A})$. This construction is pivotal. Unlike standard diffusion which corrupts all components equally, this \textbf{Orthogonal Noise Injection} ensures that the generator explores diversity \textit{only} in the unobserved subspace. It allows the model to hallucinate plausible high-frequency textures (null-space filling) without contradicting the low-frequency structural constraints (range-space backbone) imposed by the sinogram.

\begin{figure}[t]                  
  \centering
  \includegraphics[width=0.95\linewidth]{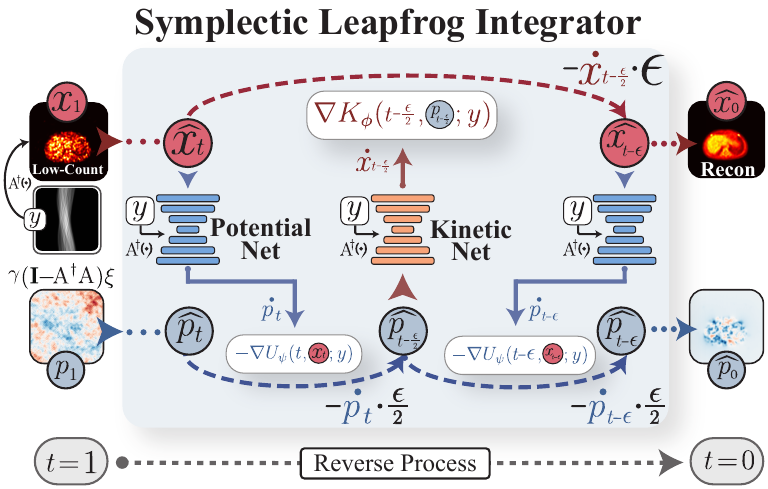}
    \caption{Symplectic Leapfrog Integrator. Leveraging the separable Hamiltonian, this time-staggered Störmer-Verlet scheme interleaves half-step momentum kicks ($\nabla U_{\psi}$) with full-step position drifts ($\nabla K_{\phi}$). This discretization maintains a unit Jacobian determinant, guaranteeing exact phase-space volume preservation.}
  \label{fig:inference}\vspace{-4mm}
\end{figure}

\begin{table*}[t]
  \caption{\textbf{Quantitative comparison} of reconstruction performance across BrainWeb ($20\%$ Count), In-House ($1\%$ Count), and UDPET Brain ($1\%$ Count) datasets. The best and second-best results are indicated in \textbf{bold} and \underline{underlined}, respectively.}
  
  \centering
  \begingroup
  \resizebox{\textwidth}{!}{%
  \begin{tabular}{l|ccc|ccc|ccc}
    \toprule
    \multirow{2.5}{*}{\textbf{Method}} 
      & \multicolumn{3}{c}{\textbf{BrainWeb (20\% Count)}}
      & \multicolumn{3}{c}{\textbf{In-House (1\% Count)}}
      & \multicolumn{3}{c}{\textbf{UDPET Brain (1\% Count)}} \\ 
    \cmidrule(lr){2-4} \cmidrule(lr){5-7} \cmidrule(lr){8-10} 
      & SSIM$\uparrow$ & PSNR$\uparrow$ & RMSE$\downarrow$
      & SSIM$\uparrow$ & PSNR$\uparrow$ & RMSE$\downarrow$
      & SSIM$\uparrow$ & PSNR$\uparrow$ & RMSE$\downarrow$ \\
    \midrule
    
    OSEM~\cite{hudson1994accelerated}                & 0.9078 & 28.35 & 0.0447  & 0.7456 & 23.59 & 0.0745  & 0.7607 & 19.87 & 0.1108  \\
    
    AutoContextCNN~\cite{xiang2017deep}             & 0.9816 & 33.64 & 0.0233  & 0.9339 & 33.66 & 0.0226  & 0.8794 & 26.29 & 0.0541  \\
    DeepPET~\cite{haggstrom2019deeppet}             & 0.9746 & 30.08 & 0.0331  & 0.8820 & 32.24 & 0.0263  & 0.8218 & 25.28 & 0.0581  \\
    CNNBPnet~\cite{zhang2020pet}                    & 0.9560 & 30.62 & 0.0329  & 0.9240 & 34.62 & 0.0200  & 0.7750 & 25.06 & 0.0621  \\
    FBPnet~\cite{wang2020fbp}                       & 0.9327 & 33.62 & 0.0231  & 0.9592 & 34.19 & 0.0210  & 0.8907 & 27.36 & 0.0463  \\
    
    
    
    LCPR-Net~\cite{xue2021lcpr}                     & 0.9769 & 33.75 & 0.0224  & 0.9222 & 34.95 & 0.0206  & 0.8919 & 27.77 & 0.0446  \\
    Sino-cGAN~\cite{liu2022deep}                    & 0.9641 & 30.76 & 0.0306  & 0.9704 & 33.58 & 0.0223  & 0.8646 & 25.54 & 0.0569  \\

    IR-SDE~\cite{luo2023image}                      & 0.9589 & 32.55 & 0.0266  & 0.9728 & 33.81 & 0.0222  & \underline{0.9103} & 25.52 & 0.0549  \\
    DGLM\_u~\cite{zhang2023deep}                    & 0.9785 & 33.58 & 0.0230  & 0.9551 & 32.93 & 0.0245  & 0.8905 & 25.95 & 0.0552  \\
    RED~\cite{ai2025red}                            & 0.9664 & 34.45 & 0.0210  & 0.9472 & 34.15 & 0.0192  & 0.8890 & 26.51 & 0.0474  \\
    DREAM~\cite{huang2025diffusion}                 & 0.9777 & 33.07 & 0.0246  & 0.9661 & \underline{35.77} & \underline{0.0173}  & 0.9081 & \underline{28.49} & \underline{0.0411}  \\
    FourierPET~\cite{zhang2026fourierpet}          & \textbf{0.9859} & \underline{35.36} & \underline{0.0198}  & \underline{0.9740} & 35.19 & 0.0188  & 0.9083 & 27.98 & 0.0437  \\
    \midrule
    \midrule
    
    \textit{FlowPET} (Ours) 
      & \emph{0.9838} & \textbf{36.34} & \textbf{0.0188}  
      & \textbf{0.9811} & \textbf{36.35} & \textbf{0.0160}
      & \textbf{0.9139} & \textbf{28.88} & \textbf{0.0387}\\
    \bottomrule
  \end{tabular}%
  }
  \endgroup
  \label{tab:recon_comparison}\vspace{-4mm}
\end{table*}

\begin{figure*}[t]                  
  \centering
  \includegraphics[width=\textwidth]{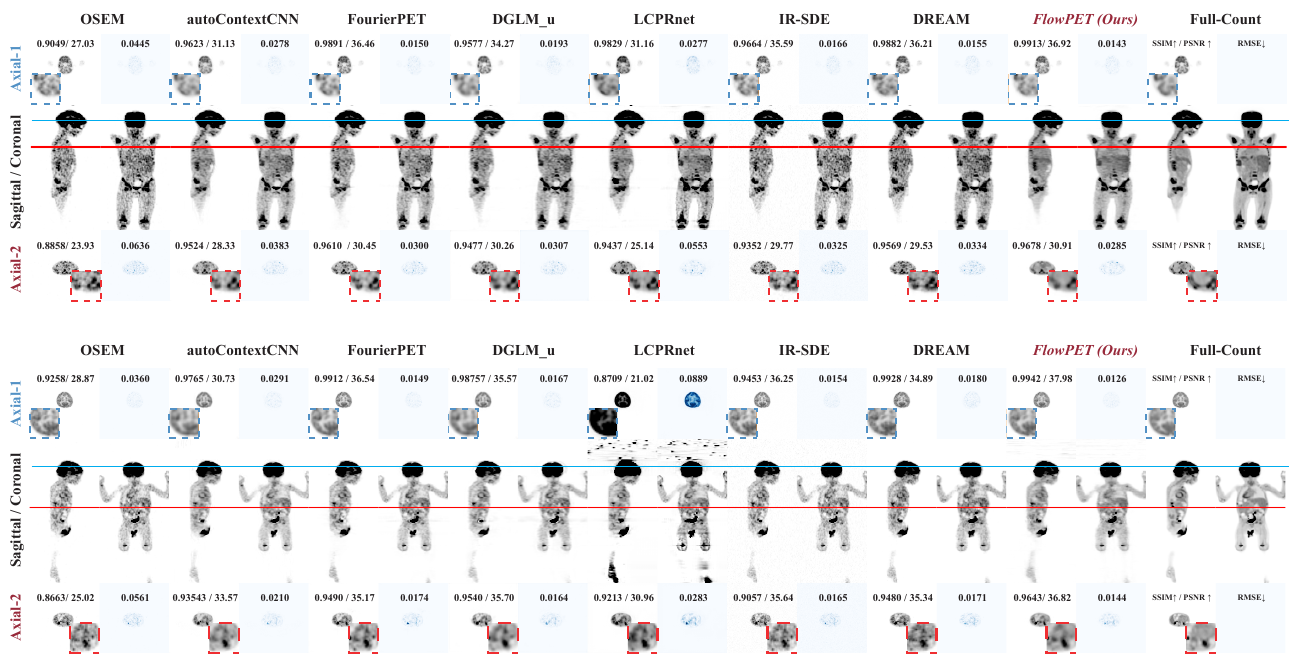}
    \caption{Qualitative visualization results on the clinical pediatric dataset. The figure displays Axial cross-sections of the brain (top, indicated by the blue line) and the torso (bottom, indicated by the red line) accompanied by zoomed-in patches and error maps, alongside the central Sagittal / Coronal whole-body views. Corresponding quantitative metrics above panels validate FlowPET's superior structural fidelity compared to leading baselines.}
  \label{fig:children_vis-1}\vspace{-4mm}
\end{figure*}

\subsubsection{Symplectic Flow Matching}
To learn the conservative dynamics connecting these boundaries, we employ Symplectic Flow Matching (SFM). We define the time-dependent probability path via linear interpolation $z_t = (1-t)z_0 + t z_1$, inducing a vector field $u_t = z_1 - z_0$.
The generative Hamiltonian vector field $v_\theta$ is trained by minimizing the regression objective over the symplectic manifold:
\begin{equation}
\label{eq:loss-1}
\mathcal{L}_{\mathrm{SFM}} = \mathbb{E}_{t, z_0, z_1} \left[ \left| v_\theta(t, z_t; y) - (z_1 - z_0) \right|^2 \right].
\end{equation}
Crucially, our Separable Hamiltonian parameterization (\cref{Separable_Hamiltonian}) allows this objective to decouple into physically interpretable components:

\begin{equation} 
\label{eq:loss-2}
\mathcal{L}_{\mathrm{SFM}} 
= 
\underbrace{ 
\mathbb{E}\!\left[ 
\left| 
\nabla_p K_\phi - \Delta x 
\right|^2 
\right] }_{\text{Kinetic Matching (Transport)}} 
\;+\; 
\underbrace{ 
\mathbb{E}\!\left[ 
\left| 
-\nabla_x U\psi - \Delta p 
\right|^2 
\right] }_{\text{Potential Matching (Guidance)}}, 
\end{equation}
where $\Delta x = x_1 - x_0$ and $\Delta p = p_1 - p_0$. This decomposition simplifies optimization: $K_\phi$ learns the kinematic path of mass transport, while $U_\psi$ learns the underlying potential energy landscape that shapes the posterior geometry.

\subsection{Structure-Preserving Discretization}
\label{method4.3}

The theoretical guarantees of \textbf{FlowPET}, specifically the divergence-free nature of the vector field, are derived in the continuous-time limit. However, numerical realization requires discretizing these dynamics. 
A critical pitfall is the use of standard solvers (e.g., Runge-Kutta~\cite{butcher1996history}), which are not symplectic; they introduce numerical dissipation that acts as an artificial viscosity, violating Liouville’s theorem~\cite{safko2002classical} and causing the phase-space volume to contract over integration steps. For low-count PET, this numerical contraction would reintroduce the very signal wash-out we aim to eliminate.

To maintain geometric consistency in the discrete domain, we employ the Symplectic Leapfrog (Störmer--Verlet~\cite{verlet1968computer}) integrator in Fig~\ref{fig:inference}. Crucially, the Separable Hamiltonian architecture (\cref{Separable_Hamiltonian}) we enforced earlier acts as the enabler here: it permits the use of \textit{explicit} time-staggered updates, avoiding the prohibitive computational cost of implicit solvers required for general Hamiltonians. The update rule for a backward step $\epsilon$ (where $\epsilon > 0$ for inference) is given by:
\begin{align} 
\label{eq:leapfrog-1}
\hat{p}_{t-\frac{\epsilon}{2}} &=
\hat{p_t} + \frac{\epsilon}{2}\, \nabla_x U_\psi(t, \hat{x_t}; y), \\ 
\label{eq:leapfrog-2}
\hat{x}_{t-\epsilon} &= \hat{x_t} - \epsilon\, \nabla_p K_\phi\!\left(t-\tfrac{\epsilon}{2}, \hat{p}_{t-\frac{\epsilon}{2}}; y\right), \\
\label{eq:leapfrog-3}
\hat{p}_{t-\epsilon} &= \hat{p}_{t-\frac{\epsilon}{2}} + \frac{\epsilon}{2}\, \nabla_x U_\psi(t-\epsilon, \hat{x}_{t-\epsilon}; y). 
\end{align}
This discretization possesses a unique property: the Jacobian determinant of the update map is exactly unity. Consequently, phase-space volume is preserved exactly, independent of the step size $\epsilon$. Unlike standard solvers where error manifests as dissipation, the error in symplectic integration manifests as a bounded oscillation of the energy (tracking a ``shadow Hamiltonian''). This provides a rigorous guarantee: the null-space uncertainty injected at the boundary is transported losslessly to the final reconstruction, immune to numerical extinction (See the proof in Appendix~\ref{app:proof_3}).

\paragraph{Inference Protocol.} 
At inference time, we sample from the posterior $p(x|y)$ by reversing the flow from the prior boundary ($t=1$) to the data manifold ($t=0$). We initialize the state $z_1 = (x_1, p_1)$ using the measurement-conditioned prior established in \cref{method4.2}:
\begin{equation}
x_1 = \mathbf{A}^\dagger y, 
\quad 
p_1 = \gamma\;(\mathbf{I} - \mathbf{A}^\dagger \mathbf{A})\xi, \quad \xi \sim \mathcal{N}(0, \mathbf{I}).
\end{equation}
By integrating the learned dynamics backward, we recover a sample $x_0$. Because the flow is volume-preserving and the momentum initialization is orthogonal to the data, the resulting $x_0$ effectively superimposes generated null-space textures onto the range-space backbone provided by $y$, yielding a reconstruction that is both high-fidelity and physically consistent.

\section{Experiments}
\subsection{Experimental Setup}
\paragraph{Datasets.}
We evaluate FlowPET across three distinct low-count scenarios: (1) \textbf{BrainWeb}~\cite{aubert2006twenty}, comprising $20$ simulated volumes ($3,200$ slices) at $20\%$ dose, evaluated via leave-one-out cross-validation; (2) \textbf{In-house}, containing $60$ whole-body pediatric scans ($40,440$ slices) with synthetic $1\%$ dose, partitioned into $48$ subjects for training/validation and $12$ for testing; and (3) \textbf{UDPET Brain}~\cite{xue2022cross}, consisting of $206$ scans ($26,368$ slices) with a dose reduction factor (DRF) of $100$, split into $170$ for training/validation and $36$ for testing.

\paragraph{Network Architectures.}
To enforce the separable Hamiltonian structure $H(x,p) = U(x) + K(p)$, we do not parameterize the scalar energy directly. Instead, we employ two independent neural networks to approximate the conservative vector fields: the \textbf{Kinetic network} approximates the velocity $\nabla_p K_\phi(t, p; y)$, and the \textbf{Potential network} approximates the force field $-\nabla_x U_\psi(t, x; y)$.
Both networks utilize a U-Net backbone adapted from Guided-Diffusion~\cite{dhariwal2021diffusion}, incorporating an encoder for sinogram features $\mathrm{A}^\top(y)$ injected via Adaptive Layer Normalization (AdaLN)~\cite{perez2018film}.
We adopt an asymmetric design strategy to balance expressivity and computational inference speed: the Kinetic network $K_\phi$, responsible for the primary momentum transport, features a robust capacity ($128$ base channels, depth multipliers $1, 2, 4, 8$). Conversely, the Potential network $U_\psi$ acts as a geometric guide and is designed as a lightweight variant ($64$ base channels, multipliers $1, 2, 4$), reducing the memory footprint of the dual-network system.

\paragraph{Training Protocol.}
The framework is implemented in PyTorch 2.1.1 on four NVIDIA RTX 3090 GPUs. We train using the AdamW optimizer ($\beta_1=0.9, \beta_2=0.999$) with a batch size of $8$. The learning rate follows a cosine annealing schedule, decaying from $10^{-4}$ to $10^{-6}$ over $500$k iterations. Training pairs ($128\times128$) are samples from the each dataset, where full-count OSEM~\cite{hudson1994accelerated} reconstructions serve as the ground truth $x_{\text{GT}}$.

\subsection{Comparative Analysis}
\label{Comparison}


\paragraph{Quantitative Superiority.} As summarized in \textbf{Table~\ref{tab:recon_comparison}}, \textbf{FlowPET} establishes a new state-of-the-art across all metrics. Crucially, our method successfully navigates the trade-off that limits prior arts: it surpasses deterministic methods in texture recovery (higher SSIM) without succumbing to the hallucinations or noise artifacts typical of stochastic solvers (lower RMSE). We provide a detailed analysis of these performance gains in \textbf{Appendix~\ref{app:eval}}.

\paragraph{Qualitative Superiority.} Visual comparisons on the pediatric dataset (Figure~\ref{fig:children_vis-1}) reveal the ``signal wash-out'' phenomenon inherent in dissipative baselines. As highlighted in the zoomed-in regions, \textbf{FlowPET} treats the weak lesion signal as an invariant mass to be transported rather than noise to be dissipated, yielding structural fidelity closest to the full-count reference. Additional visual comparisons and a pixel-level error map analysis are provided in \textbf{Appendix~\ref{app:eval}}.



\subsection{Ablation Studies and Analysis}


\paragraph{Ablation on Range-Space and Null-Space Momentum.}
To validate our physics-informed phase space design, we ablate the Range-Space (\textbf{Restoring}, $p_0$) and Null-Space (\textbf{Thermal}, $p_1$) momentum components. As detailed in Table~\ref{tab:Momentum}, the baseline configuration yields the lowest metrics, confirming that unguided momentum initialization is insufficient. Incorporating the Restoring force significantly enhances fidelity, increasing Brain SSIM by $0.0132$ over the baseline; this confirms the necessity of embedding data consistency gradients. Crucially, the complete \textbf{FlowPET} framework, which integrates the orthogonal Thermal component, achieves the best structural recovery. It attains the highest SSIM on both Whole-Body and UDPET Brain datasets. While the deterministic Restoring-only configuration yields marginally higher PSNR, the combined model secures superior structural similarity, demonstrating that decoupling deterministic consistency from stochastic uncertainty is essential for capturing fine-grained anatomical details.

\begin{table}[t]
  \centering
    \caption{Ablation of Range-Space (\textbf{Restoring}, $p_0$) and Null-Space (\textbf{Thermal}, $p_1$) momentum. Best results are \textbf{bolded}.}
  \label{tab:Momentum}
      \resizebox{\linewidth}{!}{
  \begin{tabular}{cc|ccc|ccc}
    \toprule
    \multirow{2}{*}{\textbf{Restoring}} & 
    \multirow{2}{*}{\textbf{Thermal}} &
    \multicolumn{3}{c|}{\textbf{In-House Whole-Body}} & 
    \multicolumn{3}{c}{\textbf{UDPET Brain}} \\
    \cmidrule(lr){3-5} \cmidrule(lr){6-8}
    
($p_0$) & ($p_1$) & SSIM$\uparrow$ & PSNR$\uparrow$ & RMSE$\downarrow$
      & SSIM$\uparrow$ & PSNR$\uparrow$ & RMSE$\downarrow$ \\
    \midrule
                &                   & 0.9711 & 35.96 & 0.0166 & 0.8915 & 28.64 & 0.0398 \\ 
    \checkmark  &                   & 0.9766 & \textbf{36.36} & \textbf{0.0159} & 0.9047 & 28.50 & 0.0409 \\ 
                & \checkmark        & 0.9774 & 35.87 & 0.0169 & 0.9021 & 28.63 & 0.0401 \\ 
    \checkmark  & \checkmark        & \textbf{0.9811} & 36.35 & 0.0160 & \textbf{0.9139} & \textbf{28.88} &\textbf{0.0387} \\
    \bottomrule
  \end{tabular}}
\end{table}

\paragraph{Sensitivity to Momentum Compression Factor $\gamma$.}
Table~\ref{tab:gamma_comparison} investigates the reconstruction sensitivity on Whole-Body scans across logarithmic scales, and 
we observe a distinct performance peak at $\gamma=10^{-2}$.
A large $\gamma$ ($\gamma \ge 10^{-1}$) amplifies the gradients from $\mathbf{A}^\top$, injecting excessive kinetic energy. This results in ``stiff'' dynamics where the restoring force overshoots the manifold, leading to optimization instability and reduced SSIM.
An overly small $\gamma$ ($\gamma \le 10^{-3}$) dampens the momentum to near-zero. This dilutes the physical guidance ($p_0$) and limits the null-space exploration ($p_1$), causing the model to degenerate into a standard position-only regression.
Consequently, we adopt $\gamma=10^{-2}$ as the canonical setting to ensure optimal transport stability.


\begin{table}[t]
  \centering
  \caption{Comparison of Momentum Compression Factor $\gamma$ of momentum on the \textbf{In-House Whole-Body Scans}. Best results are \textbf{bolded}.}
  \label{tab:gamma_comparison}
\resizebox{0.7\linewidth}{!}{
  \begin{tabular}{c|ccc}
    \toprule
    Parameter $\gamma$ & SSIM $\uparrow$ & PSNR $\uparrow$ & RMSE $\downarrow$\\
    \midrule    1           & 0.9616 & 35.67 & 0.0172\\
    $10^{-1}$   & 0.9749 & 36.15 & 0.0163\\
    $10^{-2}$   & \textbf{0.9811} & \textbf{36.35} & \textbf{0.0160}\\
    $10^{-3}$   & 0.9796 & 35.98 & 0.0167\\
    \bottomrule
  \end{tabular}}\vspace{-3mm}
\end{table}

\textbf{Additional further studies are provided in Appendix~\ref{app:analysis}.}


\subsection{Wash-Out Analysis}
\label{Wash-Out Analysis}

To rigorously quantify the signal extinction phenomenon, we employ a \textbf{Signal Recovery Trajectory Analysis} on two synthetic benchmarks designed to isolate preservation capabilities: the \textbf{Spherical Lesion Phantom} (point sources, varying FWHM/contrast) and the \textbf{Synthetic Lesion BrainWeb} (anatomically complex background).
Distinct from standard endpoint evaluations, we monitor the evolution of the signal $\hat{x}_t$ along the discrete reverse trajectory ($t=1 \to 0$, 100 steps).
At each step, we compute both the \textbf{Mean} and \textbf{Total Relative SUV Ratios}, defined as the ratio of the recovered Standard Uptake Value (SUV) statistics within the lesion Region of Interest (ROI) to the ground truth. Crucially, this metric serves as a deterministic proxy for probability mass conservation: a ratio approaching $1.0$ indicates that the generative flow has successfully transported the lesion's signal mass from the null space to the image space, whereas a drop indicates ``wash-out'' or mass leakage into the background.

As illustrated in Figure~\ref{fig:wash-out}, the trajectories reveal a fundamental dynamic divergence.
The \textbf{Dissipative Solver}~\cite{luo2023image} (dashed lines) exhibits characteristic energy collapse: the SUV ratios rise initially but suffer from \textit{premature saturation}, plateauing well below the ground truth. This confirms that the dissipative contraction aggressively suppresses weak signals, treating them as noise to be smoothed.
In contrast, \textbf{FlowPET} (solid lines) demonstrates a sustained, monotonic recovery towards the baseline ($1.0$). By strictly enforcing symplectic conservation, our integrator shields the fragile lesion signal from numerical extinction, allowing the data-driven restoring force to fully reconstruct the pathological intensity.

\begin{figure}[t!]                  
  \centering
  \includegraphics[width=\linewidth]{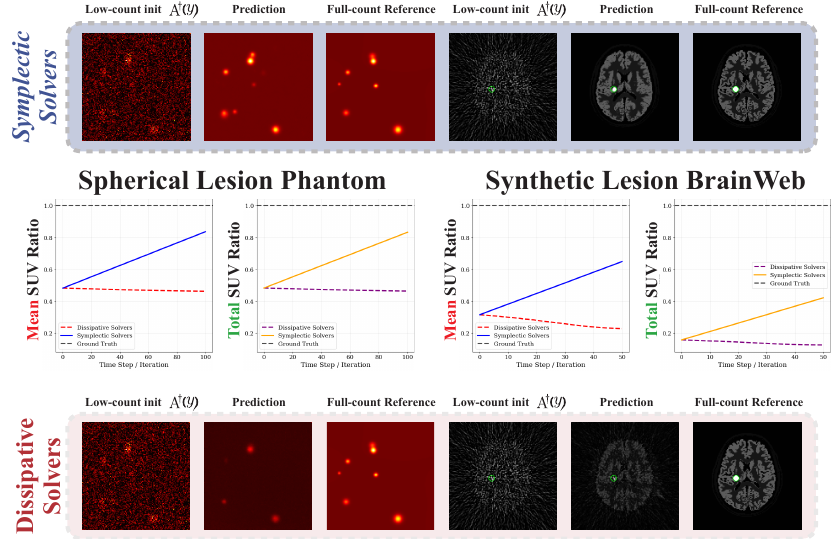}
    \caption{Quantitative analysis of the signal wash-out phenomenon on Spherical Lesion Phantom and Synthetic Lesion BrainWeb datasets. The curves track the evolution of Mean and Total SUV Ratios during the reverse reconstruction process. }
  \label{fig:wash-out}\vspace{-4mm}
\end{figure}

\section{Conclusion}
In this work, we presented \textbf{FlowPET} to resolve the signal ``wash-out'' pathology in low-count PET by replacing dissipative diffusion with volume-preserving symplectic transport. By embedding the Range-Null space decomposition into the Separable Hamiltonian dynamics, our framework successfully decouples the restoration of measurement-consistent structures from the stochastic exploration of texture. This design effectively immunizes diagnostically vital lesions against phase-space contraction while maintaining high structural fidelity. Our results establish symplectic geometry not merely as a tool for physical simulation, but as a robust geometric safeguard for solving ill-posed inverse problems where information conservation is as critical as noise removal. Future work will explore extending this conservative paradigm to dynamic PET imaging and other scientific domains governed by strict conservation laws.
\section*{Impact Statement}
This paper presents work whose goal is to advance the field of machine learning. There are many potential societal consequences of our work, none of which we feel must be specifically highlighted here.

\bibliography{example_paper}
\bibliographystyle{icml2026}

\newpage
\appendix
\onecolumn



\section{Algorithm and Proofs}
\subsection{FlowPET Training and Inference Algorithm}

\newcommand{\RETURN}{\STATE \textbf{return} }
\newcommand{\GRAYCOMMENT}[1]{\STATE \textcolor{gray}{\# \textit{#1}}}
\newcommand{\RED}[1]{\STATE \textcolor{red}{#1}}
\newcommand{\BLUE}[1]{\STATE \textcolor{blue}{#1}}
\newcommand{\M}[1]{\mathbf{#1}}
\begin{figure}[ht]
    \centering
    \begin{minipage}[t]{0.48\textwidth}
        \begin{algorithm}[H]
            \caption{\textbf{Training}: Symplectic Flow Matching}
            \label{alg:training}
            \begin{algorithmic}[1] 
                \REQUIRE Dataset $\mathcal{D} = \{(x_{\text{Full}}, y)\}$, System Matrix $\mathrm{A}$
                \REQUIRE Potential Net $U_\psi$, Kinetic Net $K_\phi$
                \STATE \textbf{Hyperparameters:} $\gamma$ (guidance weight)
                \REPEAT
                    \STATE Sample batch $(x_0, y) \sim \mathcal{D}$ where $x_0 = x_{\text{Full}}$
                    
                    \GRAYCOMMENT{1. Phase-Space Boundaries (Range-Null)}
                    \STATE $x_1 \leftarrow \mathrm{A}^\dagger y$ \{FBP Prior\}
                    \STATE $p_0 \leftarrow \mathrm{A}^\top (y - \mathrm{A} x_0)$ \{Range-Space Momentum\}
                    \STATE $\xi \sim \mathcal{N}(0, I)$
                    \STATE $p_1 \leftarrow (I - \mathrm{A}^\dagger \mathrm{A})\xi$ \{Null-Space Momentum\}
                    
                    \GRAYCOMMENT{ 2. Optimal Transport Interpolation}
                    \STATE $t \sim \mathcal{U}[0, 1]$
                    \STATE \textcolor{red}{$x_t \leftarrow (1-t)x_0 + t x_1$}
                    \STATE \textcolor{blue}{$p_t \leftarrow (1-t)p_0 + t p_1$}
                    
                    \GRAYCOMMENT{ 3. Vector Field Prediction}
                    \STATE $v_t \leftarrow (\textcolor{red}{x_1 - x_0}, \textcolor{blue}{p_1 - p_0})$ \{Target Flow\}
                    \STATE $\hat{v}_\theta \leftarrow (\nabla_p K_\phi(t,\textcolor{blue}{p_t};y), -\nabla_x U_\psi(t,\textcolor{red}{x_t});y)$ 
                    
                    \GRAYCOMMENT{ 4. Optimization}
                    \STATE $\mathcal{L}_{\text{kinetic}} \leftarrow \| \nabla_p K_\phi - (\textcolor{red}{x_1 - x_0}) \|^2$
                    \STATE $\mathcal{L}_{\text{potential}} \leftarrow \| -\nabla_x U_\psi - \gamma\; (\textcolor{blue}{p_1 - p_0}) \|^2$
                    \STATE $\mathcal{L} \leftarrow \mathcal{L}_{\text{kinetic}} + \mathcal{L}_{\text{potential}}$
                    \STATE Update $\phi, \psi$ using $\nabla \mathcal{L}$
                \UNTIL{converged}
            \end{algorithmic}
        \end{algorithm}
    \end{minipage}
    \hfill
\begin{minipage}[t]{0.49\textwidth}
        \begin{algorithm}[H]
            \caption{\textbf{Inference}: Symplectic Leapfrog}
            \label{alg:inference}
            \begin{algorithmic}[1]
                \REQUIRE Measurement $y$, Matrix $\M{A}$
                \REQUIRE Trained Nets $U_\psi, K_\phi$, Steps $N$
                \ENSURE Recon Image $x_{\text{recon}}$
                
                \GRAYCOMMENT{1. Initialize at t=1 (Prior, Eq. 14)}
                \STATE $x_{1} \leftarrow \M{A}^\dagger y$
                \STATE $\xi \sim \mathcal{N}(0, \M{I})$
                \STATE $p_{1} \leftarrow (\M{I} - \M{A}^\dagger \M{A})\xi$
                
                \STATE $\epsilon \leftarrow 1/N$ \quad \{Step Size, $\epsilon > 0$\}
                \STATE $x_{\text{curr}} \leftarrow x_1, \quad p_{\text{curr}} \leftarrow p_1$
                
                \GRAYCOMMENT{2. Backward Integration (t: 1 $\to$ 0)}
                \FOR{$i = N$ \textbf{down to} $1$}
                    \STATE $t \leftarrow i/N$ \quad \{Current Time\}
                    
                    \GRAYCOMMENT{A. Half-Step Kick (Eq. 11)}
                    \STATE $\textcolor{blue}{g_{\text{pot}}} \leftarrow - \nabla_x U_\psi(t, \textcolor{red}{x_{\text{curr}}}; y)$
                    \STATE $\textcolor{blue}{p_{\text{half}} \leftarrow p_{\text{curr}} - \frac{\epsilon}{2} \cdot g_{\text{pot}}}$
                    
                    \GRAYCOMMENT{B. Full-Step Drift (Eq. 12)}
                    \STATE $t_{\text{mid}} \leftarrow t - \epsilon/2$
                    \STATE $\textcolor{red}{g_{\text{kin}}} \leftarrow \nabla_p K_\phi(t_{\text{mid}}, \textcolor{blue}{p_{\text{half}}}; y)$
                    \STATE $\textcolor{red}{x_{\text{next}} \leftarrow x_{\text{curr}} - \epsilon \cdot g_{\text{kin}}}$
                    
                    \GRAYCOMMENT{C. Half-Step Kick (Eq. 13)}
                    \STATE $t_{\text{next}} \leftarrow t - \epsilon$
                    \STATE $\textcolor{blue}{g_{\text{pot}}'} \leftarrow - \nabla_x U_\psi(t_{\text{next}}, \textcolor{red}{x_{\text{next}}}; y)$
                    \STATE $\textcolor{blue}{p_{\text{next}} \leftarrow p_{\text{half}} - \frac{\epsilon}{2} \cdot g_{\text{pot}}'}$
                    
                    \STATE $x_{\text{curr}} \leftarrow x_{\text{next}}$
                    \STATE $p_{\text{curr}} \leftarrow p_{\text{next}}$
                \ENDFOR
                \RETURN $x_{\text{recon}} \leftarrow x_{\text{curr}}$
            \end{algorithmic}
        \end{algorithm}
    \end{minipage}
\end{figure}

\subsection{Proof of Proposition~\ref{prop:div_free}}
\label{app:proof-1}

\begin{proposition}[Restatement of Proposition~\ref{prop:div_free}]
\label{prop:divergence_free}
Assume the potential term $U_{\psi}(t,x;y)$ and the kinetic term $K_{\phi}(t,p;y)$ are continuously differentiable with respect to the state $x \in \mathcal{X}$ and momentum $p \in \mathcal{P}$, respectively. The vector field $v_{\theta}(t,z;y)$ induced by the Separable Hamiltonian System defined in Eq. (4) is divergence-free, satisfying:
\begin{equation}
    \nabla_z \cdot v_{\theta}(t, z; y) \equiv 0, \quad \forall z \in \mathcal{Z}.
\end{equation}
\end{proposition}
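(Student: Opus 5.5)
The plan is to compute the divergence directly from the block structure of the vector field in Eq.~(\ref{vector_field}). Write $z=(x,p)\in\mathbb{R}^{2d}$ and $v_\theta=(v^x,v^p)$ with $v^x(t,p;y)=\nabla_p K_\phi(t,p;y)$ and $v^p(t,x;y)=-\nabla_x U_\psi(t,x;y)$. By definition,
\begin{equation*}
\nabla_z\cdot v_\theta=\sum_{i=1}^d \frac{\partial v^x_i}{\partial x_i}+\sum_{i=1}^d \frac{\partial v^p_i}{\partial p_i}
=\operatorname{tr}\Big(\frac{\partial v^x}{\partial x}\Big)+\operatorname{tr}\Big(\frac{\partial v^p}{\partial p}\Big).
\end{equation*}
So the divergence is the trace of the full $2d\times 2d$ Jacobian $\partial v_\theta/\partial z$, and only its two diagonal $d\times d$ blocks contribute.

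The key step is to use separability (Eq.~(\ref{Separable_Hamiltonian})) to show that both diagonal blocks vanish identically.
\begin{itemize}
\item The $x$-velocity $v^x$ is a function of $(t,p,y)$ only. Holding $p$ fixed and varying $x$ leaves it unchanged, so $\partial v^x/\partial x=\mathbf{0}$ for every $z$.
\item Likewise, $v^p$ depends only on $(t,x,y)$, so $\partial v^p/\partial p=\mathbf{0}$.
\end{itemize}
Differentiability of the components of $v_\theta$ is needed for these partial derivatives to be meaningful. The stated $C^1$ hypothesis on $U_\psi,K_\phi$ only makes $v_\theta$ continuous, so I would either read it as $C^2$ or note that the argument uses only the fact that each component is constant in the relevant variables, and that part holds regardless. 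Summing the two zero traces gives $\nabla_z\cdot v_\theta\equiv 0$ for all $z\in\mathcal{Z}$.

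As a cross-check, I would also note the general fact for any $C^2$ Hamiltonian: $\nabla_z\cdot(J\nabla_z H)=\operatorname{tr}(J\nabla_z^2H)=0$, because $J$ is antisymmetric and the Hessian $\nabla_z^2H$ is symmetric. This recovers the statement without separability, but the direct block argument is what matches Proposition~\ref{prop:div_free}.

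There is no real obstacle; the only delicate point is the regularity bookkeeping above. I would also remark that in the implementation (see Network Architectures) the two networks output vector fields directly rather than gradients of scalars. The divergence-free conclusion still holds there, since it uses only the variable-dependence structure ($v^x$ independent of $x$, $v^p$ independent of $p$), not conservativity.
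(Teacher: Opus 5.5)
Your proposal is correct and follows the same route as the paper's proof: you write the Jacobian of $v_\theta$ in $2\times 2$ block form, use separability to make the diagonal blocks $\partial(\nabla_p K_\phi)/\partial x$ and $\partial(-\nabla_x U_\psi)/\partial p$ vanish identically, and conclude that the trace is zero. Your regularity remark is a fair refinement that the paper glosses over: its Jacobian displays $\nabla^2_{pp}K_\phi$ and $\nabla^2_{xx}U_\psi$ in the off-diagonal blocks, which tacitly assumes $C^2$, whereas the divergence only needs the diagonal partials, and those are identically zero.
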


\begin{proof}
The proof proceeds by explicitly deriving the Jacobian of the vector field and analyzing the trace of its block components.

\paragraph{Step 1: Formulation of the Symplectic Vector Field.}
Recall that the phase-space state is defined as $z = [x^\top, p^\top]^\top \in \mathbb{R}^{2d}$ in Section~\ref{sec:method}. The dynamics are governed by the canonical equations of motion derived from the separable Hamiltonian $H_\theta(t,x,p) = U_\psi(t,x) + K_\phi(t,p)$. The induced vector field $v_\theta$ is given by the symplectic gradient:
\begin{align}
    v_\theta(t, z; y) = \begin{bmatrix} \dot{x} \\ \dot{p} \end{bmatrix} 
    = J \nabla_z H_\theta 
    = \begin{bmatrix} \mathbf{0}_d & \mathbf{I}_d \\ -\mathbf{I}_d & \mathbf{0}_d \end{bmatrix} \begin{bmatrix} \nabla_x U_\psi \\ \nabla_p K_\phi \end{bmatrix} 
    = \begin{bmatrix} \nabla_p K_\phi(t, p; y) \\ -\nabla_x U_\psi(t, x; y) \end{bmatrix}, \label{eq:vector_field}
\end{align}
where $J$ denotes the standard symplectic matrix.

\paragraph{Step 2: Analysis of the Jacobian Matrix.}
The divergence of the vector field corresponds to the trace of its Jacobian matrix $J_{v_\theta} = \frac{\partial v_\theta}{\partial z} \in \mathbb{R}^{2d \times 2d}$. By differentiating Eq. \eqref{eq:vector_field} with respect to $z$, we obtain the block matrix structure:
\begin{align}
    J_{v_\theta} = \begin{bmatrix}
        \frac{\partial \dot{x}}{\partial x} & \frac{\partial \dot{x}}{\partial p} \\
        \frac{\partial \dot{p}}{\partial x} & \frac{\partial \dot{p}}{\partial p}
    \end{bmatrix}
    = \begin{bmatrix}
        \frac{\partial}{\partial x} \left( \nabla_p K_\phi \right) & \frac{\partial}{\partial p} \left( \nabla_p K_\phi \right) \\
        \frac{\partial}{\partial x} \left( -\nabla_x U_\psi \right) & \frac{\partial}{\partial p} \left( -\nabla_x U_\psi \right)
    \end{bmatrix}.
\end{align}

\paragraph{Step 3: Vanishing Diagonal Blocks via Separability.}
We exploit the structural constraints imposed by the Separable Hamiltonian parameterization:
\begin{enumerate}
    \item The kinetic term $K_\phi$ is a function solely of momentum $p$. Thus, its gradient $\nabla_p K_\phi$ is independent of position $x$, implying the upper-left diagonal block vanishes:
    \begin{align}
        \frac{\partial}{\partial x} (\nabla_p K_\phi) = \mathbf{0}_{d \times d}.
    \end{align}
    \item Similarly, the potential term $U_\psi$ is a function solely of position $x$. Thus, its gradient $-\nabla_x U_\psi$ is independent of momentum $p$, implying the lower-right diagonal block vanishes:
    \begin{align}
        \frac{\partial}{\partial p} (-\nabla_x U_\psi) = \mathbf{0}_{d \times d}.
    \end{align}
\end{enumerate}

Substituting these results back into the Jacobian yields a block-diagonal matrix with zero diagonals:
\begin{align}
    J_{v_\theta} = \begin{bmatrix}
        \mathbf{0}_{d \times d} & \nabla^2_{pp} K_\phi \\
        -\nabla^2_{xx} U_\psi & \mathbf{0}_{d \times d}
    \end{bmatrix}.
\end{align}

\paragraph{Conclusion.}
The divergence is defined as the sum of the diagonal elements of the Jacobian. Therefore:
\begin{align}
    \nabla_z \cdot v_{\theta} = \mathrm{tr}(J_{v_\theta}) = \mathrm{tr}(\mathbf{0}_{d \times d}) + \mathrm{tr}(\mathbf{0}_{d \times d}) = 0.
\end{align}
This confirms that the generated flow preserves phase-space volume element $\mathrm{d}x \wedge \mathrm{d}p$ by construction (Liouville's Theorem~\cite{safko2002classical}).
\end{proof}

\subsection{Physical Interpretation of Range-Space Momentum}
\label{app:proof_2}

Here we justify the initialization of the Range-Space momentum $p_0$ (Eq.~\ref{eq:p0_define}) as embedding the magnitude-scaled data score.

\begin{proof}
We adopt a Gaussian approximation to the Poisson log-likelihood, yielding the L2 data fidelity potential:
\begin{equation}
    \mathcal{E}_{data}(x) = \frac{1}{2} \| y - Ax \|^2,
\end{equation}
where $A$ is the system matrix. The "physical force" exerted by this potential on the image state $x$ is given by the negative gradient:
\begin{align}
    F_{data}(x) &= -\nabla_x \mathcal{E}_{data}(x) \\
    &= -\nabla_x \left( \frac{1}{2} (y - Ax)^\top (y - Ax) \right) \\
    &= A^\top (y - Ax).
\end{align}
In our framework, we define the source momentum $p_0$ incorporating the Momentum Compression Factor $\gamma$:
\begin{equation}
    p_0 \coloneqq \gamma A^\top (y - Ax_0).
\end{equation}
Therefore, $p_0$ corresponds to the negative gradient of the potential, \textbf{rescaled} by the preconditioning factor $\gamma$:
\begin{equation}
    p_0 = -\gamma \nabla_x \mathcal{E}_{data}(x)|_{x=x_0}.
\end{equation}

\textbf{Physical Implication:} By initializing the momentum with the scaled negative gradient, we provide an initial velocity that pushes the trajectory towards the low-energy manifold (where $Ax \approx y$). Crucially, the scalar $\gamma$ acts as a \textit{linear time-invariant preconditioner}. It preserves the direction of the \textit{restoring force} (ensuring data consistency guidance) while normalizing its magnitude to match the dynamic range of the symplectic transport, thereby preventing scale disparity caused by the ill-conditioned operator $A^\top$.
\end{proof}

\subsection{Volume Preservation in Symplectic Leapfrog Integration}
\label{app:proof_3}

We prove that the Symplectic Leapfrog integrator, employed for the reverse diffusion process, strictly preserves the phase-space volume at every discretization step. This ensures that the continuous-time guarantee of Liouville's Theorem (Proposition~\ref{prop:div_free}) translates effectively to the numerical domain, preventing the "signal wash-out" phenomenon described in the main text.

\begin{theorem}[Discrete Liouville Property]
Let $\Phi_\epsilon: \mathcal{Z} \to \mathcal{Z}$ denote the discrete mapping defined by one step of the Symplectic Leapfrog integrator with step size $\epsilon$, mapping the state $\hat{z}_t = (\hat{x}_t, \hat{p}_t)$ to $\hat{z}_{t-\epsilon} = (\hat{x}_{t-\epsilon}, \hat{p}_{t-\epsilon})$. The Jacobian determinant of this mapping is unity:
\begin{equation}
    |\det(\mathbf{J}_{\Phi_\epsilon})| = \left| \det \frac{\partial(\hat{x}_{t-\epsilon}, \hat{p}_{t-\epsilon})}{\partial(\hat{x}_t, \hat{p}_t)} \right| \equiv 1.
\end{equation}
\end{theorem}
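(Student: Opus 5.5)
The plan is to factor the leapfrog step into three elementary maps and show each has unit Jacobian determinant. Write $\Phi_\epsilon = \mathcal{K}_2 \circ \mathcal{D} \circ \mathcal{K}_1$, where
\begin{itemize}
\item $\mathcal{K}_1(x,p) = \bigl(x,\; p + \tfrac{\epsilon}{2}\nabla_x U_\psi(t,x;y)\bigr)$ is the first half-step kick of Eq.~(\ref{eq:leapfrog-1});
\item $\mathcal{D}(x,p) = \bigl(x - \epsilon\,\nabla_p K_\phi(t-\tfrac{\epsilon}{2},p;y),\; p\bigr)$ is the full-step drift of Eq.~(\ref{eq:leapfrog-2});
\item $\mathcal{K}_2(x,p) = \bigl(x,\; p + \tfrac{\epsilon}{2}\nabla_x U_\psi(t-\epsilon,x;y)\bigr)$ is the second kick of Eq.~(\ref{eq:leapfrog-3}).
\end{itemize}
Composing these reproduces exactly the map $\hat z_t \mapsto \hat z_{t-\epsilon}$. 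By the chain rule, $\mathbf{J}_{\Phi_\epsilon} = \mathbf{J}_{\mathcal{K}_2}\,\mathbf{J}_{\mathcal{D}}\,\mathbf{J}_{\mathcal{K}_1}$, with each factor evaluated at the appropriate intermediate point. Multiplicativity of the determinant then reduces the theorem to computing three determinants. We assume, as in Proposition~\ref{prop:div_free}, enough smoothness that the Jacobians exist; $U_\psi,K_\phi\in C^2$ suffices.

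The key observation is that each substep is a shear. Separability (Eq.~(\ref{Separable_Hamiltonian})) makes each kick change only $p$, by an amount depending only on $x$. Its Jacobian is therefore block lower-triangular:
\[
\mathbf{J}_{\mathcal{K}_i} = \begin{bmatrix} \mathbf{I}_d & \mathbf{0} \\ \tfrac{\epsilon}{2}\nabla^2_{xx}U_\psi & \mathbf{I}_d \end{bmatrix}.
\]
Likewise the drift changes only $x$, by an amount depending only on $p$, so its Jacobian is block upper-triangular with the off-diagonal block $-\epsilon\nabla^2_{pp}K_\phi$ and identity diagonal blocks. The determinant of a block-triangular matrix is the product of the determinants of its diagonal blocks, which here is $\det\mathbf{I}_d\cdot\det\mathbf{I}_d=1$. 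Hence $\det\mathbf{J}_{\Phi_\epsilon}=1\cdot1\cdot1=1$, and in particular $|\det\mathbf{J}_{\Phi_\epsilon}|\equiv 1$ for every $\epsilon$ and every state.

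The only delicate point is the role of separability and of the explicit time arguments. I must check that no substep makes the increment of a variable depend on that same variable. If it did, a nontrivial diagonal block such as $\mathbf{I}+\epsilon\,\partial_p\nabla_pK$ would appear and the determinant would differ from $1$. Two facts rule this out. First, $\nabla_xU_\psi$ is independent of $p$ and $\nabla_pK_\phi$ is independent of $x$, exactly the vanishing-diagonal-block facts used in Proposition~\ref{prop:div_free}. Second, the time arguments $t$, $t-\epsilon/2$ and $t-\epsilon$ are fixed constants within a step and not state variables, so they contribute nothing to the Jacobian.

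A remark on the implementation: the networks output vector fields directly rather than exact gradients. The shear argument does not need symmetric Hessians, only that the kick increment depends on $x$ alone and the drift increment on $p$ alone. So unit determinant holds for the implemented architecture as well, even though full symplecticity, $\mathbf{J}^\top J\mathbf{J}=J$, would require genuine gradients.
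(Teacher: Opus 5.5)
Your proposal is correct and follows the paper's proof essentially step for step. The paper also writes $\Phi_\epsilon = T_3\circ T_2\circ T_1$ as kick--drift--kick, obtains block lower/upper-triangular Jacobians with identity diagonal blocks (off-diagonal blocks $\tfrac{\epsilon}{2}\nabla_x^2 U_\psi$ and $-\epsilon\nabla_p^2 K_\phi$), and multiplies the three unit determinants. Your additional observations, that $C^2$ regularity is what is actually needed and that unit determinant survives even when the networks output non-gradient fields while full symplecticity would not, are correct refinements that the paper does not state.
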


\begin{proof}
Based on Eqs.~\ref{eq:leapfrog-1}-\ref{eq:leapfrog-3} in the main text and Algorithm 2, the transition $\hat{z}_t \to \hat{z}_{t-\epsilon}$ is a composition of three sub-transformations: $\Phi_\epsilon = T_3 \circ T_2 \circ T_1$. We analyze the Jacobian of each sub-step individually.

\paragraph{Step 1: First Momentum Kick ($T_1$).}
The first half-step update for momentum is given by:
\begin{align}
    \hat{x}^* &= \hat{x}_t, \\
    \hat{p}^* &= \hat{p}_t + \frac{\epsilon}{2} \nabla_x U_\psi(t, \hat{x}_t).
\end{align}
The Jacobian matrix $\mathbf{J}_1 = \frac{\partial(\hat{x}^*, \hat{p}^*)}{\partial(\hat{x}_t, \hat{p}_t)}$ has a block lower-triangular structure:
\begin{equation}
    \mathbf{J}_1 = \begin{bmatrix}
        \mathbf{I}_d & \mathbf{0} \\
        \frac{\epsilon}{2} \nabla_x^2 U_\psi(t, \hat{x}_t) & \mathbf{I}_d
    \end{bmatrix},
\end{equation}
where $\nabla_x^2 U_\psi$ is the Hessian of the potential. The determinant is the product of the diagonal blocks: $\det(\mathbf{J}_1) = \det(\mathbf{I}_d) \cdot \det(\mathbf{I}_d) = 1$.

\paragraph{Step 2: Position Drift ($T_2$).}
The full-step position update is defined as:
\begin{align}
    \hat{x}^{**} &= \hat{x}^* - \epsilon \nabla_p K_\phi(t-\frac{\epsilon}{2}, \hat{p}^*), \\
    \hat{p}^{**} &= \hat{p}^*.
\end{align}
The Jacobian $\mathbf{J}_2$ exhibits a block upper-triangular structure, as $\hat{x}^{**}$ depends on $\hat{p}^*$ but $\hat{p}^{**}$ is merely a copy of $\hat{p}^*$:
\begin{equation}
    \mathbf{J}_2 = \begin{bmatrix}
        \mathbf{I}_d & -\epsilon \nabla_p^2 K_\phi(t-\frac{\epsilon}{2}, \hat{p}^*) \\
        \mathbf{0} & \mathbf{I}_d
    \end{bmatrix}.
\end{equation}
Similarly, $\det(\mathbf{J}_2) = 1$.

\paragraph{Step 3: Second Momentum Kick ($T_3$).}
The final half-step momentum update is:
\begin{align}
    \hat{x}_{t-\epsilon} &= \hat{x}^{**}, \\
    \hat{p}_{t-\epsilon} &= \hat{p}^{**} + \frac{\epsilon}{2} \nabla_x U_\psi(t-\epsilon, \hat{x}^{**}).
\end{align}
This mirrors the structure of $T_1$, yielding a Jacobian $\mathbf{J}_3$ with $\det(\mathbf{J}_3) = 1$.

\paragraph{Conclusion.}
By the chain rule, the determinant of the full step is the product of the determinants of the sub-steps:
\begin{equation}
    \det(\mathbf{J}_{\Phi_\epsilon}) = \det(\mathbf{J}_3) \cdot \det(\mathbf{J}_2) \cdot \det(\mathbf{J}_1) = 1 \cdot 1 \cdot 1 = 1.
\end{equation}
This result holds theoretically regardless of the step size $\epsilon$ or the complexity of the neural networks $U_\psi$ and $K_\phi$, confirming that the numerical integration is exactly volume-preserving (symplectic).
\end{proof}


  
  
  
    
    
      

\section{Further Analyses}
\label{app:analysis}

\subsection{Analysis of Integrator Numerical Dissipation}
\begin{table}[ht!]
  \centering
  \caption{Comparison of solvers on the \textbf{Synthetic Lesion BrainWeb} from~\ref{Wash-Out Analysis}. Best results are \textbf{bolded}.}
  \label{tab:solvers}
\resizebox{0.5\linewidth}{!}{
  \begin{tabular}{c|ccc}
    \toprule
    Solvers & SSIM $\uparrow$ & PSNR $\uparrow$ & Lesion Contrast $\uparrow$\\
    \midrule
    Euler                                   & 0.7401 & 22.89 & 0.8083\\
    RK4~\cite{butcher1996history}           & 0.7459 & 23.50 & 0.8244\\
    Leapfrog                                & \textbf{0.7460} & \textbf{23.52} & \textbf{0.8267}\\
    \bottomrule
  \end{tabular}}
\end{table}

To validate the critical role of geometric conservation, we conducted an ablation study on the Integrator Numerical Dissipation. We fixed the trained Potential ($U_{\psi}$) and Kinetic ($K_{\phi}$) networks and performed inference on 20 Synthetic Lesion BrainWeb cases from~\ref{Wash-Out Analysis} ($N=100$ steps) using three solvers: Euler, Runge–Kutta (RK4)~\cite{butcher1996history}, and our Symplectic Leapfrog.

The results in Table~\ref{tab:solvers} confirm that non-symplectic solvers introduce artificial viscosity. While the high-order RK4 yields competitive structural metrics (SSIM 0.7459) compared to Euler (0.7401), Symplectic Leapfrog achieves the highest Lesion Contrast (0.8267), surpassing both RK4 (0.8244) and Euler (0.8083). This evidenced that the superior recovery of weak signals is not merely due to network capacity, but strictly stems from the Leapfrog integrator's Volume-Preserving property, which prevents the numerical "wash-out" of diagnostic features observed in dissipative solvers.

\subsection{Impact of Range-Null Momentum Decomposition}
\begin{table}[ht!]
\centering
\scriptsize
\setlength{\tabcolsep}{5pt}
\renewcommand{\arraystretch}{1.1}
\caption{Ablation of phase-space boundary strategies on UDPET Brain. We compare \textbf{Unguided} ($p_0=0$), \textbf{Isotropic} ($p_1 \sim \mathcal{N}$), and \textbf{Orthogonal} (Ours) strategies. Best results are \textbf{bolded}.}
\label{tab:Restoring_strategies}
\begin{tabular}{cc|ccc}
\toprule
\textbf{Restoring Force} ($p_0$) & \textbf{Thermal Noise} ($p_1$) & SSIM$\uparrow$ & PSNR$\uparrow$ & RMSE$\downarrow$ \\
\midrule
None (Zeros)          & Null-Space Noise$^{\ast}$ & 0.9021 & \textbf{28.95} & \textbf{0.0383} \\
Data Score$^{\ast}$   & Isotropic Gaussian       & 0.9112 & 28.79 & 0.0387 \\
\textbf{Data Score$^{\ast}$} & \textbf{Null-Space Noise$^{\ast}$} & \textbf{0.9139} & 28.88 & 0.0387 \\
\bottomrule
\end{tabular}
\end{table}

To isolate the contributions of our Range-Null decomposition, we compare three momentum initialization strategies (Table~\ref{tab:Restoring_strategies}):(1) \textbf{Unguided Baseline:} $p_0=0$ implies no restoring force;(2) \textbf{Naive Guidance:} $p_0$ uses the data score, but $p_1$ injects isotropic Gaussian noise, ignoring operator orthogonality;(3) \textbf{FlowPET (Ours):} Fully decoupled physics-informed boundaries.

\textbf{Results Analysis.} The comparison reveals two critical insights:\begin{itemize}\item \textbf{Necessity of Restoring Force ($p_0$):} Comparing Row 1 and Row 3, introducing the Data Score significantly boosts SSIM (+0.0118). While the unguided baseline achieves the highest PSNR, this is a classic signature of \textit{regression-to-the-mean}: without a restoring force to pull the trajectory onto the sharp data manifold, the model defaults to a low-variance, over-smoothed average.\item \textbf{Superiority of Orthogonal Injection ($p_1$):} Comparing Row 2 and Row 3 validates the Range-Null hypothesis. Injecting isotropic noise (Row 2) degrades structural fidelity (lower SSIM) because it corrupts the valid \textit{Range Space} information. By confining uncertainty strictly to the \textit{Null Space} (Row 3), FlowPET synthesizes texture without conflicting with the measurement consistency, achieving the optimal balance of structural fidelity and perceptual realism.\end{itemize}

\begin{figure*}[ht]                  
  \centering
  \includegraphics[width=\textwidth]{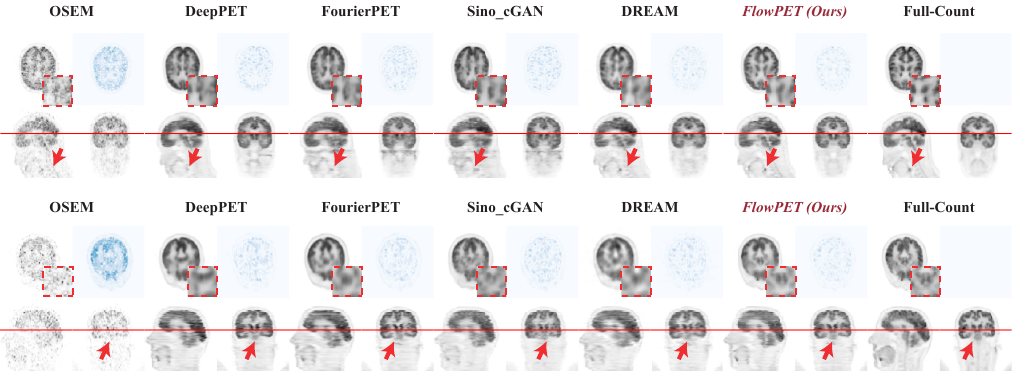}
    \caption{Qualitative visualization results on the UDPET Brain ($1\%$ Count) dataset. The figure displays Axial accompanied by zoomed-in patches and error maps, alongside the central Sagittal / Coronal views. 
    }
  \label{fig:app1}
\end{figure*}

\begin{figure*}[ht]                  
  \centering
  \includegraphics[width=\textwidth]{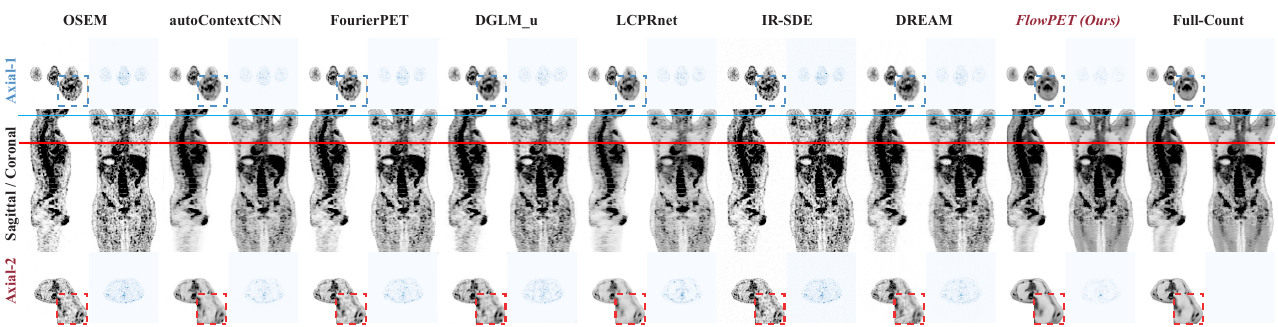}
    \caption{Qualitative visualization results on the clinical pediatric dataset. The figure displays Axial cross-sections of the brain (top, indicated by the blue line) and the torso (bottom, indicated by the red line) accompanied by zoomed-in patches and error maps, alongside the central Sagittal / Coronal whole-body views.}
  \label{fig:app2}\vspace{-2mm}
\end{figure*}

\begin{figure*}[ht]                  
  \centering
  \includegraphics[width=\textwidth]{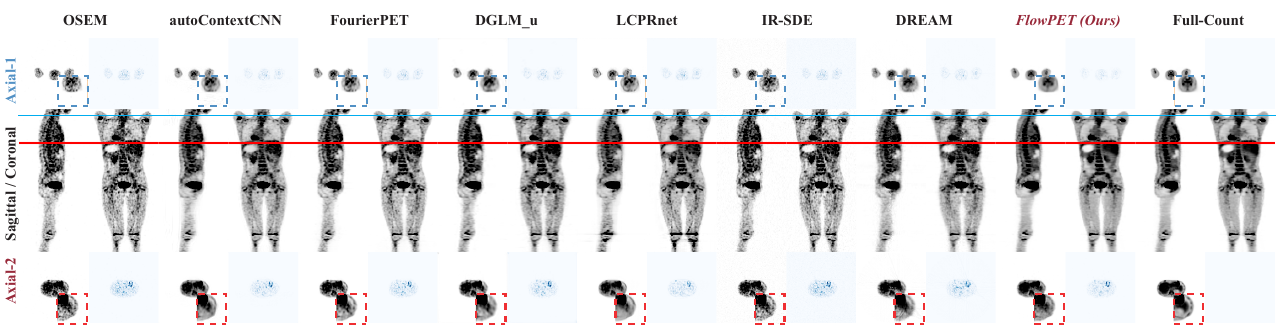}
    \caption{Qualitative visualization results on the clinical pediatric dataset. The figure displays Axial cross-sections of the brain (top, indicated by the blue line) and the torso (bottom, indicated by the red line) accompanied by zoomed-in patches and error maps, alongside the central Sagittal / Coronal whole-body views.}
  \label{fig:app3}\vspace{-2mm}
\end{figure*}

\begin{figure*}[ht]                  
  \centering
  \includegraphics[width=\textwidth]{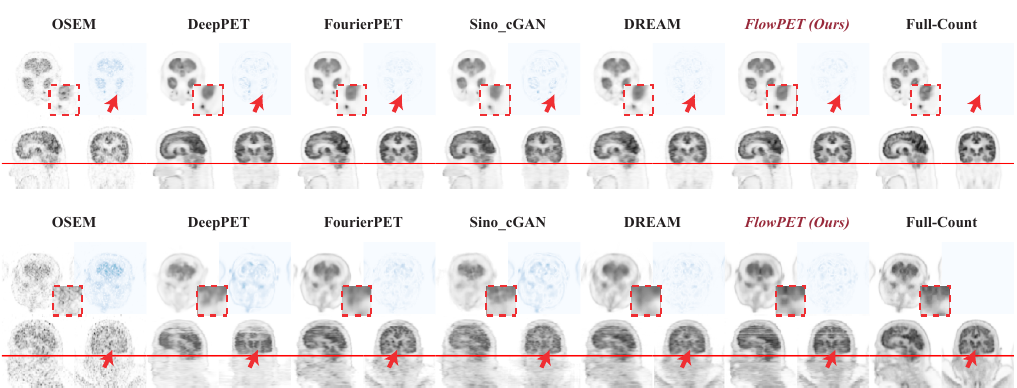}
    \caption{Qualitative visualization results on the UDPET Brain ($1\%$ Count) dataset. The figure displays Axial accompanied by zoomed-in patches and error maps, alongside the central Sagittal / Coronal views. 
    }
  \label{fig:app4}
\end{figure*}

\begin{figure*}[ht]                  
  \centering
  \includegraphics[width=\textwidth]{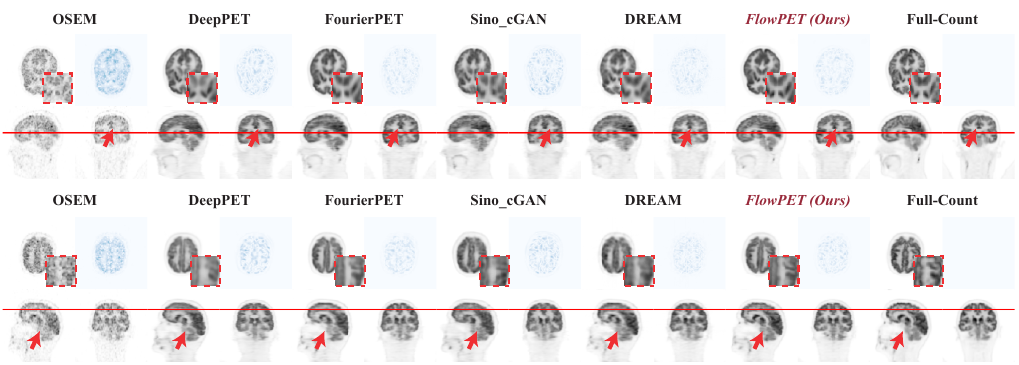}
    \caption{Qualitative visualization results on the UDPET Brain ($1\%$ Count) dataset. The figure displays Axial accompanied by zoomed-in patches and error maps, alongside the central Sagittal / Coronal views. 
    }
  \label{fig:app5}
\end{figure*}

\begin{figure*}[ht]                  
  \centering
  \includegraphics[width=\textwidth]{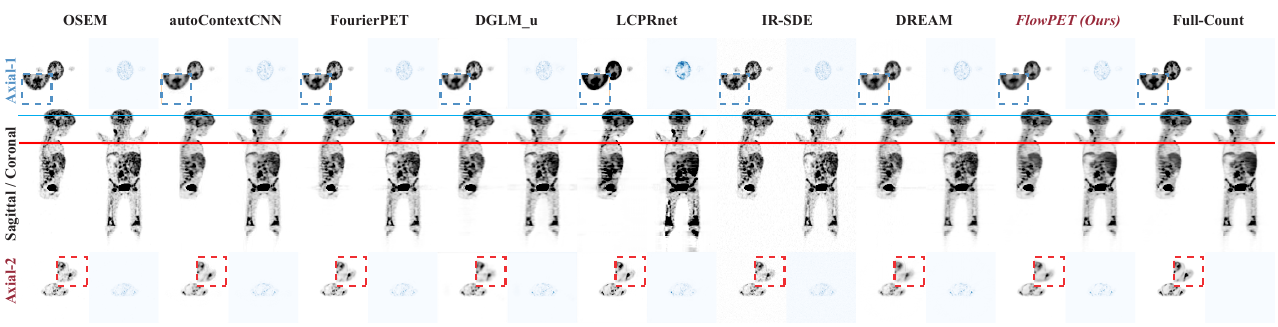}
    \caption{Qualitative visualization results on the clinical pediatric dataset. The figure displays Axial cross-sections of the brain (top, indicated by the blue line) and the torso (bottom, indicated by the red line) accompanied by zoomed-in patches and error maps, alongside the central Sagittal / Coronal whole-body views.}
  \label{fig:app6}\vspace{-2mm}
\end{figure*}

\begin{figure*}[ht]                  
  \centering
  \includegraphics[width=\textwidth]{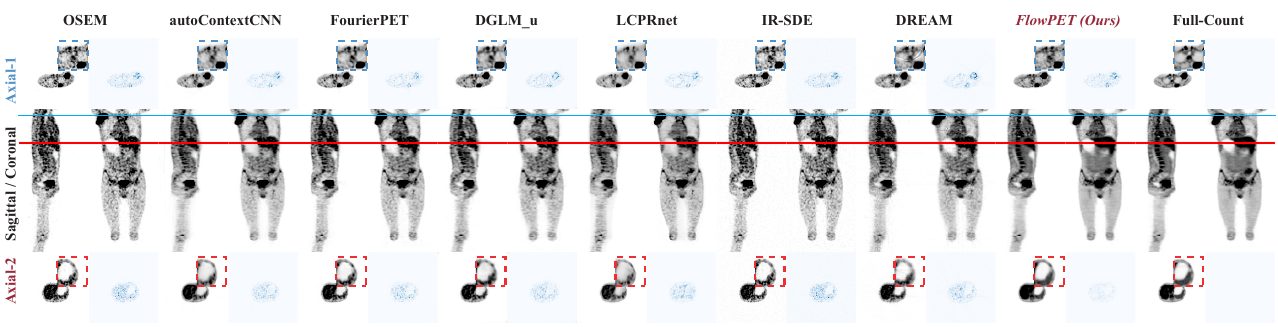}
    \caption{Qualitative visualization results on the clinical pediatric dataset. The figure displays Axial cross-sections of the brain (top, indicated by the blue line) and the torso (bottom, indicated by the red line) accompanied by zoomed-in patches and error maps, alongside the central Sagittal / Coronal whole-body views.}
  \label{fig:app7}\vspace{-2mm}
\end{figure*}
\section{Additional Analysis and Evaluation}
\label{app:eval}
\paragraph{Comprehensive Quantitative Analysis}
As detailed in Table~\ref{tab:recon_comparison}, \textbf{FlowPET} establishes a new state-of-the-art benchmark, exhibiting particularly robust generalization in severe low-count scenarios. While competitive on the standard BrainWeb benchmark ($20\%$ count), where it achieves top-tier fidelity comparable to the specialized FourierPET~\cite{zhang2026fourierpet}, its advantage becomes decisive in the high-noise asymptotic regimes of the In-House and UDPET Brain datasets ($1\%$ count). Specifically, on the challenging In-House dataset, FlowPET attains a PSNR of \textbf{$36.35$ dB} and an SSIM of \textbf{$0.9811$}, surpassing the leading stochastic generative baseline, DREAM~\cite{huang2025diffusion}, by a substantial margin of 0.58 dB. This empirical evidence confirms that symplectic transport offers a superior geometric prior for navigating the extreme ill-posedness of 100$\times$ dose reduction tasks.

\textbf{Mechanism of Superiority.} FlowPET resolves the trade-off plaguing prior arts: 
\begin{itemize} 
\item \textbf{vs. Deterministic Regression:} Unlike methods prone to regression-to-the-mean (e.g., AutoContextCNN~\cite{xiang2017deep}, DGLM\_u~\cite{zhang2023deep} and FourierPET~\cite{zhang2026fourierpet}), FlowPET utilizes orthogonal null-space injection to sample the full posterior, preserving high-frequency texture that regression averages out. \item \textbf{vs. Dissipative Generation:} Contrary to standard diffusion models (e.g., IR-SDE~\cite{luo2023image} and DREAM~\cite{huang2025diffusion}) where dissipative dynamics inadvertently suppress weak signals via phase-space contraction, FlowPET enforces \textbf{volume-preserving symplectic transport}. This geometric safeguard treats faint pathological signals as invariant mass to be transported rather than noise to be dissipated, ensuring superior recovery of low-contrast lesions without numerical extinction. 
\end{itemize}

\paragraph{Extended Qualitative Assessment}
\label{app:vis_analysis}
Figures~\ref{fig:app1} to~\ref{fig:app7} present more visual results of the compared methods, further illustrating the reconstruction performance of FlowPET relative to competing baselines across diverse anatomical sections.

\section{Limitations and Future Work}
\paragraph{Limitations}
A critical constraint is that, unlike dissipative solvers that naturally contract noise, the proposed volume-preserving dynamics lack an intrinsic mechanism to stabilize convergence. This risks phase-space oscillations if the learned potential energy surface is imperfect. Furthermore, the computational overhead of the iterative Symplectic Leapfrog integrator remains a significant bottleneck for real-time clinical deployment compared to single-step methods.

\paragraph{Future Work}
Research must prioritize rigorous Out-of-Distribution (OOD) generalization, validating robustness across diverse scanner geometries and acquisition protocols to ensure the model does not overfit to specific physical constraints. Additionally, the evaluation landscape should expand beyond pixel-level metrics to include quantitative clinical assessments, such as observer studies for lesion detectability, to confirm the method's diagnostic utility in practice.

\end{document}